%% file: main.tex
\documentclass{article}

\PassOptionsToPackage{numbers, compress}{natbib}

\usepackage[preprint]{neurips_2026}

\usepackage[utf8]{inputenc} 
\usepackage[T1]{fontenc}    
\usepackage[table]{xcolor}  
\usepackage[
  colorlinks=true,
  citecolor=green!50!black,
  linkcolor=red,
  urlcolor=blue
]{hyperref}
\usepackage{url}            
\usepackage{booktabs}       
\usepackage{amsfonts}       
\usepackage{nicefrac}       
\usepackage{microtype}      
\usepackage{amsmath}
\usepackage{algorithm}
\usepackage{graphicx}
\usepackage{subcaption} 
\usepackage{xspace}
\usepackage{enumitem}
\usepackage{algpseudocode}
\usepackage{amsthm}
\usepackage{amssymb} 
\usepackage{tcolorbox}
\usepackage{tabularx}
\usepackage{caption}
\usepackage{array}
\usepackage{fontawesome5}
\usepackage{tikz}
\usepackage{wrapfig}
\newcolumntype{Y}{>{\centering\arraybackslash}X}
\newcolumntype{C}{>{\centering\arraybackslash}X}
\newcolumntype{Z}{>{\columncolor{zeroshotbg}\centering\arraybackslash}X}
\newcolumntype{S}{>{\columncolor{sftbg}\centering\arraybackslash}X}
\newcolumntype{D}{>{\columncolor{wsdmbg}\centering\arraybackslash}X}

\definecolor{lightgreen}{rgb}{0.85, 1.0, 0.85}
\definecolor{best}{RGB}{210, 230, 250} 
\definecolor{second}{RGB}{235, 245, 255} 
\definecolor{wmsdblue}{RGB}{0,55,130}
\definecolor{zeroshotbg}{RGB}{235,245,255}
\definecolor{sftbg}{RGB}{245,238,255}
\definecolor{wsdmbg}{RGB}{235,250,240}
\definecolor{rulegray}{RGB}{190,190,190}

\newtheorem{theorem}{Theorem}

\newtheorem{proposition}[theorem]{Proposition}

\theoremstyle{definition}

\theoremstyle{remark}

\DeclareRobustCommand{\method}{\textit{WMSD}\xspace}
\DeclareRobustCommand{\methodname}{\textit{World-Model Self-Distillation}\xspace}
\DeclareRobustCommand{\benchname}{\textit{WorldTasks-Bench}\xspace}
\DeclareRobustCommand{\datasetname}{\textit{WorldTasks}\xspace}

\definecolor{unibernred}{RGB}{238,16,30}
\newlength{\unibernLogoHeight}

\newcommand{\unibernWordLine}[3]{%
  \node[
    anchor=base west,
    inner sep=0pt,
    text=unibernred,
    font=\fontfamily{phv}\bfseries\fontsize{120}{120}\selectfont,
    xscale=1.09
  ] at (#1,#2) {#3};
}

\newcommand{\unibernTikZLogo}[1][0.75cm]{%
  \begingroup
  \setlength{\unibernLogoHeight}{#1}%
  \pgfmathsetmacro{\unibernScale}{\the\unibernLogoHeight/335pt}%
  \begin{tikzpicture}[x=1pt,y=1pt,scale=\unibernScale,transform shape]
    \path[use as bounding box] (0,0) rectangle (1012,335);

    \node[
      anchor=base west,
      inner sep=0pt,
      text=black,
      font=\fontfamily{ptm}\bfseries\itshape\fontsize{240}{240}\selectfont,
      xscale=1.02,
      yscale=0.98
    ] at (34,61) {u};

    \node[
      anchor=base west,
      inner sep=0pt,
      text=black,
      font=\fontfamily{ptm}\itshape\fontsize{122}{122}\selectfont,
      xscale=0.86,
      yscale=1.00
    ] at (190,172) {b};

    \unibernWordLine{305}{130}{UNIVERSITY}
    \unibernWordLine{305}{10}{OF BERN}
  \end{tikzpicture}%
  \endgroup
}

\newcommand{\unibernTikZLogoHor}[1][0.75cm]{%
  \begingroup
  \setlength{\unibernLogoHeight}{#1}%
  \pgfmathsetmacro{\unibernScale}{\the\unibernLogoHeight/335pt}%
  \begin{tikzpicture}[x=1pt,y=1pt,scale=\unibernScale,transform shape]
    \path[use as bounding box] (0,0) rectangle (1900,335);

    \node[
      anchor=base west,
      inner sep=0pt,
      text=black,
      font=\fontfamily{ptm}\bfseries\itshape\fontsize{240}{240}\selectfont,
      xscale=1.02,
      yscale=0.98
    ] at (34,61) {u};

    \node[
      anchor=base west,
      inner sep=0pt,
      text=black,
      font=\fontfamily{ptm}\itshape\fontsize{122}{122}\selectfont,
      xscale=0.86,
      yscale=1.00
    ] at (190,170) {b};

\node[
  anchor=base west,
  inner sep=0pt,
  text=unibernred,
  font=\fontfamily{phv}\bfseries\fontsize{165}{165}\selectfont,
  xscale=1.02
] at (280,55) {University of Bern};
  \end{tikzpicture}%
  \endgroup
}
\title{World Model Self-Distillation: Training World Models to Solve General Tasks}

\author{
Sebastian Stapf \quad
Pablo Acuaviva Huertos \quad
Aram Davtyan \quad
Paolo Favaro \\
Department of Computer Science\\[-0.45em]
\unibernTikZLogo[0.65cm]\\[-0.2em]
\texttt{
\{sebastian.stapf, pablo.ahuertos, aram.davtyan, paolo.favaro\}@unibe.ch
} \\[0.2em]
{\color{blue}
\faGlobe\ 
\href{https://sebastian-stapf.github.io/world-model-self-distillation/}{Project Page}
\quad
\faGithub\ 
\href{https://github.com/sebastian-stapf/world-model-self-distillation}{Code}
\quad
\faDatabase\ 
\href{https://huggingface.co/datasets/sebastian-stapf/WorldTasks}{Dataset}
}
}

\begin{document}

\maketitle

\begin{figure}[h]
    \centering
    \includegraphics[width=0.99\textwidth]{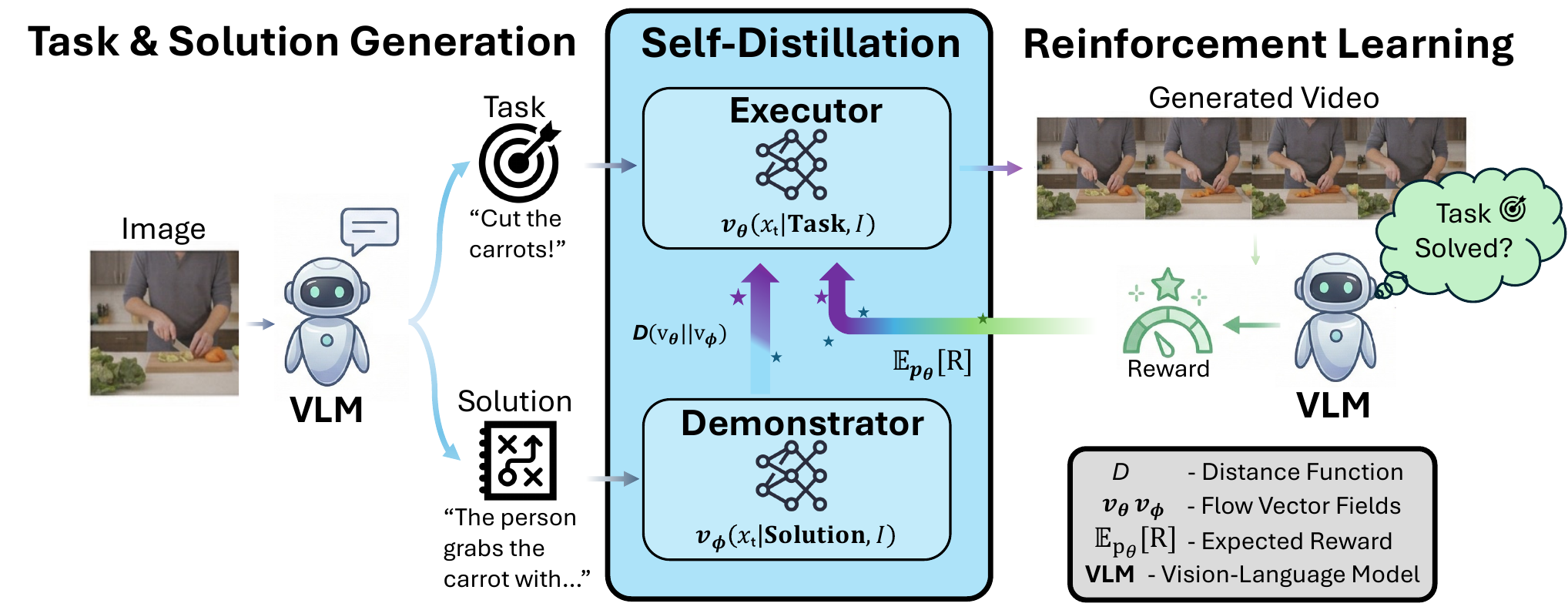}
    \caption{
     Overview of WMSD\@. The method addresses general tasks via a two-stage pipeline. \textbf{(Left)} A vision--language model (VLM) generates task descriptions along with corresponding solution prompts. \textbf{(Bottom $\rightarrow$ Top)} These solutions supervise the distillation of a video diffusion model (\textbf{Demonstrator}) into a task-conditioned video model (\textbf{Executor}), enabling the Executor to reproduce effective reasoning strategies. \textbf{(Right)} To further improve performance, reinforcement learning is applied: the VLM evaluates generated solution videos and provides feedback to refine the Executor.
    }
    \label{fig:teaser}
\end{figure}

\begin{abstract}
Pretrained video generators are promising visual world models that exhibit emergent task-solving abilities; however, their reliance on detailed textual descriptions limits their direct use for planning and decision-making. Existing approaches either outsource this reasoning to language or vision-language models, or rely on supervised fine-tuning with paired task-execution videos, which are costly to collect and difficult to scale. We propose a scalable framework that elicits task-solving ability in such models by combining self-distillation with reinforcement learning. Given an unlabeled scene image, a vision-language model generates a candidate task and a detailed step-by-step solution. The solution conditions a pretrained video diffusion model, the \emph{Demonstrator}; we distill its behavior into an \emph{Executor} conditioned only on the image and a short task prompt. This transfers execution knowledge from caption-guided generation to instruction-conditioned task solving without curated task-video supervision. We further improve the Executor with reinforcement learning from VLM feedback, exploiting the asymmetry between judging whether a sampled video satisfies a task and generating the solution. Experiments on \benchname and the DreamGen robotics benchmark show that the Executor surpasses the Demonstrator under our VLM-based evaluation protocol and transfers competitively to robotic tasks.
\end{abstract}

\input{sections/01_introduction}
\input{sections/02_related_work}
\input{sections/03_method}

\input{sections/04_experiments}

\section{Conclusion}

In summary, the experiments show that \method consistently improves task-solving ability, agent correctness, and physical consistency across a wide range of settings. A key strength of the framework is that it converts the detailed execution knowledge available to caption-guided video generation into a compact instruction-following interface, without requiring curated task-execution videos. In particular, combining on-policy self-distillation with reinforcement learning proves especially effective, enabling the model to surpass the Demonstrator under the VLM-based benchmark while maintaining efficient inference. The VLM-based reward further lets the model exploit the asymmetry between generating a correct future and recognizing one, turning noisy task-level feedback into measurable gains. At the same time, the Demonstrator anchor preserves useful pretrained behavior and prevents reinforcement learning from drifting toward visually implausible solutions. Beyond controlled benchmarks, the competitive transfer to robotic tasks further highlights the robustness and generality of the approach, suggesting that \method is a promising direction for scalable and data-efficient world model training.

\section*{Acknowledgements}
This work was supported by a grant from the Swiss National Supercomputing Centre (CSCS) under project ID a144 on Alps as part of the Swiss AI Initiative, and by SNSF Grant 10001278. Additional computations were carried out on UBELIX (https://www.id.unibe.ch/hpc), the high-performance computing cluster at the University of Bern.

\bibliographystyle{plainnat}

\bibliography{references}

\appendix

\input{sections/appendix}

\end{document}

%% file: sections/01_introduction.tex
\section{Introduction}

World models are a promising paradigm for enabling agents to reason about their environment by internally simulating possible action sequences and selecting those that best achieve a desired goal~\cite{Ha2018WorldModels,Sutton1991Dyna,Hafner2019PlaNet,Hafner2020Dreamer,Schrittwieser2020MuZero}. Recent advances in visually pretrained world models, particularly video generative models, have demonstrated striking emergent capabilities that resemble intelligent behavior~\cite{Guo2025AreVM,Wiedemer2025VideoMA,Acuaviva2025FromGT,Acuaviva2025RethinkingVI,Bruce2024Genie}.

Common instantiations of such world models are pretrained text- or image-to-video generators~\cite{Ho2022ImagenVideo,Blattmann2023SVD,He2025PreTrainedVG,Hassan2024GEMAG,Hong2025RELICIV,stapf2026composition}. However, their reliance on textual conditioning, typically requiring a detailed description of the scene or action, limits their direct applicability to task solving. In practice, they do not autonomously infer how to execute a task; instead, they depend on the reasoning of external models such as language models or vision-language models (VLMs) to specify the solution. Ideally, we would like the world model to be able to accept a high-level task description and internally generate a plausible sequence of actions, thereby directly leveraging the knowledge acquired during pretraining.

One direct way to close this gap is supervised fine-tuning: collect pairs of task instructions and videos that demonstrate successful executions, and train the video model to generate the corresponding trajectory. However, this approach requires a large and diverse set of successful demonstrations, covering many environments, objects, and levels of task abstraction. Acquiring such data is costly, especially when tasks are long-horizon or when success depends on fine-grained object interactions. Large-scale world-model platforms, robot-learning datasets, and video curation pipelines reduce this burden, but they do not remove the need for scalable task supervision~\cite{Agarwal2025CosmosWF,Chi2023DiffusionPolicy,Brohan2023RT1,Brohan2023RT2,ONeill2024OpenXEmbodiment,NVIDIA2025GROOTN1}.

Reinforcement learning offers a complementary route. Instead of imitating only fixed demonstrations, a model can sample candidate solutions, receive feedback, and improve the probability of generations that satisfy the task. This paradigm has been central to preference-based training of language models~\cite{Christiano2017DeepRL,Stiennon2020SummarizeHF,Ouyang2022TrainingLM}, and recent work has begun to adapt RL objectives to diffusion and flow-based generative models~\cite{Black2024TrainingDM,Fan2023DPOK,Wallace2023DiffusionDPO,Liu2025FlowGRPO}. In the video domain, however, this strategy faces a severe computational bottleneck. The most successful video generators are commonly based on diffusion or flow matching, and producing even a short clip may require many denoising or integration steps~\cite{Ho2020DDPM,Lipman2022FlowMF,Ho2022ImagenVideo,Blattmann2023SVD}. Since RL requires many rollouts per update, naively applying RL to multi-step video generators is prohibitively expensive.

Few-step distillation helps address this bottleneck~\cite{Salimans2022ProgressiveDistillation,Song2023ConsistencyModels,Luo2023LCM,Liu2022RectifiedFlow}. Distribution Matching Distillation (DMD) trains a fast student to match a slower diffusion teacher by minimizing an approximate distributional divergence between student and teacher samples~\cite{Yin2023OneStepDW}. Because the objective can be evaluated on student-generated samples, it is attractive for iterative improvement without paired real videos at every update~\cite{Agarwal2023OnPolicyDO,Yin2025CausVid}.

We argue that a similar framework can be leveraged beyond efficiency gains and used instead to elicit task-solving capabilities in video world models. First, by conditioning the student model, which we call the \emph{Executor}, on high-level task instructions (e.g., ``cut the carrots'') together with an initial observation, and training it to match outputs from a teacher, the \emph{Demonstrator}, which is conditioned on detailed execution descriptions, the student learns to map instructions directly to plausible action sequences. This effectively transforms the generator into an instruction-following, task-solving world model. Instances of such task-solution pairs are given in Fig.~\ref{fig:worldtasks_examples}.
Because this approach operates in a self-distillation setting~\cite{Hinton2015Distilling,Furlanello2018BornAgain}, it remains constrained by the task-solving ability of the demonstrator, effectively placing an upper bound on performance. To move beyond this limitation, reinforcement learning is introduced into the process. Generated rollouts can then be evaluated by a VLM, which assesses whether the produced video successfully fulfills the given instruction.

This relies on a generation-verification asymmetry: for many structured tasks, finding a valid solution can be much harder than checking a proposed one~\cite{song2025mind}. In our setting, we instantiate this verifier with a vision-language model, following work showing that VLMs can serve as zero-shot reward models for language-specified visual tasks~\cite{Rocamonde2023VisionLanguageMA,Wang2024RLVLMF,Jiang2025DistributionMD}. 
Nevertheless, raw VLM rewards can be noisy and inconsistent, especially for ambiguous visual tasks~\cite{Amodei2016ConcreteProblems,Huang2023VBench,Bansal2024VideoPhy}. We therefore view VLM feedback not as a standalone ground-truth reward, but as a weak verification signal to be combined with distributional regularization from the teacher. The combination with self-distillation provides a natural way to stabilize this signal. We call our method \emph{World Model Self-Distillation} (WMSD) and give a general overview in Fig.~\ref{fig:teaser}.

\begin{figure}[t]
    \centering
    \includegraphics[width=\linewidth]{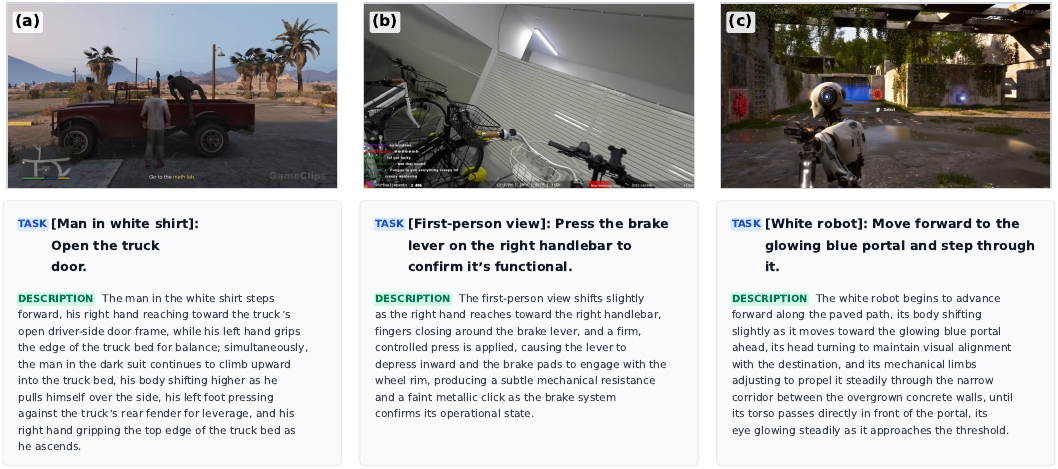}
    \caption{
    \textbf{WorldTasks examples.}
    Each panel shows an initial frame together with the addressed-agent task prompt and the original generated solution description.
    Examples cover human, first-person, and robot agents across interaction, manipulation, and navigation tasks.
    }
    \label{fig:worldtasks_examples}
\end{figure}

\begin{tcolorbox}[colback=second, colframe=second, width=\textwidth, boxrule=0pt, arc=0pt] 
To summarize, our \textbf{main contributions} are:
\begin{enumerate}
    \item We propose a self-distillation method that turns pretrained caption-conditioned video diffusion models into instruction-conditioned task solvers, without requiring paired task-execution videos.
    \item We augment this distillation procedure with reinforcement learning from VLM feedback, allowing the task-executing model to surpass its teacher under our VLM-based evaluation and remain competitive with methods trained using curated task-specific supervision.
   \item We provide a task-solution prompt dataset that leverages VLMs to derive tasks and detailed execution descriptions from unlabeled scene images.
    \item We provide a benchmark for evaluating general task solving in generated videos.

\end{enumerate}
\end{tcolorbox}

%% file: sections/02_related_work.tex
\section{Related Work}

\paragraph{Task-Conditioned World Models}

Prior work conditions world models on language, actions, or task specifications, including large-scale video foundation models and diffusion world models~\cite{Agarwal2025CosmosWF,Ho2022ImagenVideo,Blattmann2023SVD,stapf2026composition}, generative interactive environments~\cite{Bruce2024Genie}, and robot or visuomotor policies trained from task demonstrations~\cite{Chi2023DiffusionPolicy,Brohan2023RT1,Brohan2023RT2,ONeill2024OpenXEmbodiment}. Other planning systems combine video generation with language or vision-language models~\cite{Du2023LearningUP,Du2023VideoLP,Pan2024VLPVL}. We focus on a weaker inference-time interface: the Executor receives only an image and a short task instruction, while privileged step-by-step descriptions are used only during training through the Demonstrator.

\paragraph{Self-Distillation and Distribution Matching}

Knowledge distillation and self-distillation train compact or improved students from teacher predictions~\cite{Hinton2015Distilling,Rusu2015PolicyD,Furlanello2018BornAgain}. On-policy self-distillation and iterative refinement can improve models under distribution shift, especially when combined with reinforcement learning~\cite{Agarwal2023OnPolicyDO,Hubotter2026ReinforcementLV,Shenfeld2026SelfDistillationEC}. Distribution matching and consistency-style objectives similarly align student and teacher generative distributions, often for efficiency and stability~\cite{Yin2023OneStepDW,Salimans2022ProgressiveDistillation,Song2023ConsistencyModels,Luo2023LCM,Liu2022RectifiedFlow}. We use this asymmetry for task transfer rather than only acceleration: the teacher sees detailed execution descriptions, whereas the student must solve the task from a compact instruction.

\paragraph{Reinforcement Learning for Flow Models}

Recent methods adapt policy optimization to diffusion and flow-based generators and improve training stability through flow-specific refinements~\cite{Black2024TrainingDM,Fan2023DPOK,Wallace2023DiffusionDPO,Liu2025FlowGRPO,He2025TempFlowGRPOWT,Xue2025DanceGRPOUG,Li2025MixGRPOUF}. In contrast to reward-only alignment, we combine VLM task rewards with a Demonstrator-derived distillation reward and anchor loss, so RL improves task success while teacher guidance regularizes visual dynamics.

%% file: sections/03_method.tex
\section{Method}
\label{sec:method}

\paragraph{World-model setting}
We study instruction-conditioned video world models that generate future trajectories conditioned on an initial observation and a task specification. Given an observation $\mathcal{I}$ and instruction $\mathcal{T}$, the goal is to model
\begin{align}
    p(\tau \mid \mathcal{I}, \mathcal{T}),
\end{align}
where $\tau = \{x_t\}_{t\in[0,1]}$ denotes a latent video trajectory corresponding to task execution. We instantiate the world model using conditional flow-matching video generators~\cite{Lipman2022FlowMF} and consider a teacher--student setup inspired by knowledge and policy distillation~\cite{Hinton2015Distilling,Rusu2015PolicyD}: the teacher receives a detailed execution description $\mathcal{D}$, while the student must solve the task using only the instruction $\mathcal{T}$.

\paragraph{Setup}

We use conditional flow-matching video models~\cite{Lipman2022FlowMF} in a teacher--student setting. 
Each example contains an initial observation $\mathcal{I}$, a short task instruction $\mathcal{T}$, and a detailed execution description $\mathcal{D}$. The student, or \emph{Executor}, is conditioned only on
$c_{\mathrm{E}} = (\mathcal{I}, \mathcal{T})$, whereas the teacher, or \emph{Demonstrator}, is conditioned on the richer description $c_{\mathrm{D}} = (\mathcal{I}, \mathcal{D})$.
The teacher is fixed with parameters $\theta'$, while the student has trainable parameters $\theta$.

Let $x_t \in \mathbb{R}^d$ be the latent video state at flow time $t \in [0,1]$, with $x_0 \sim p_0$, where $p_0$ is the Normal distribution, and $x_1 \sim p_1$, where $p_1$ denotes the latent video data distribution. At inference time, $x_1$ is decoded into the generated video. The student and teacher define velocity fields $v_{\theta}(x_t,t \mid c_{\mathrm{E}})$ and $v_{\theta'}(x_t,t \mid c_{\mathrm{D}})$. A student flow trajectory satisfies
\begin{align}
    \frac{d x_t}{d t}
    =
    v_{\theta}(x_t,t \mid c_{\mathrm{E}}),
    \qquad
    x_0 \sim p_0.
    \label{eq:ODE}
\end{align}

\begin{wrapfigure}[12]{l}{0.45\textwidth}
\vspace{-1.5em}
\begin{minipage}{0.45\textwidth}
\hrule
\vspace{0.5ex}
\refstepcounter{algorithm}\label{alg:wmsd}
\textbf{Algorithm~\thealgorithm} World Model Self-Distillation
\vspace{0.5ex}
\hrule
\vspace{0.5ex}

\begin{algorithmic}[1]
\Require Demonstrator $v_{\theta'}$, Executor $v_\theta$, VLM $g$
\For{training iteration}
    \State Sample image $\mathcal I$
    \State Generate $(\mathcal T,\mathcal D)\leftarrow g(\mathcal I)$
    \State Sample rollout $\tau \sim p_\theta(\cdot\mid\mathcal I,\mathcal T)$
    \State Compute $r_{\mathrm{task}}$ using VLM feedback
    \State Compute $r_{\mathrm{distill}}$ via Eq.~\ref{eq:distillation_reward}
    \State Form reward $R(\tau)$ via Eq.~\ref{eq:total_reward}
    \State Optimize $\mathcal L_{\mathrm{RL}}$
    \State Compute $\mathcal L_{\mathrm{anchor}}$ via Eq.~\ref{eq:anchor_loss}
    \State Update using Eq.~\ref{eq:final_objective}
\EndFor
\end{algorithmic}

\vspace{0.5ex}
\hrule
\end{minipage}
\vspace{-1em}
\end{wrapfigure}
Teacher trajectories are analogous, replacing $v_\theta, c_{\mathrm{E}}$ with $v_{\theta'}, c_{\mathrm{D}}$.
Let $\tau = \{x_t\}_{t \in [0,1]}$ denote a trajectory, with
$p_{\theta}(\tau \mid c_{\mathrm{E}})$ and $p_{\theta'}(\tau \mid c_{\mathrm{D}})$ denoting the trajectory distributions induced by the student and teacher samplers. With a small abuse of notation, we write
$p_{\theta}(x_t,t \mid c_{\mathrm{E}})$ for the corresponding student state-time occupancy distribution obtained by sampling $\tau \sim p_\theta(\cdot \mid c_{\mathrm{E}})$ and $t \sim \mathcal{U}[0,1]$.

The goal is to train the student to solve tasks from $c_{\mathrm{E}}$, using the teacher under $c_{\mathrm{D}}$ as dense guidance.

The overall training procedure is summarized in Alg.~\ref{alg:wmsd}.

\vspace{2\baselineskip}
\paragraph{Off-policy distillation}

Matching the student velocity to the teacher velocity at teacher states gives
\begin{align}
    \mathcal{L}_{\mathrm{off}}
    =
    \mathbb{E}_{(x_t,t) \sim p_{\theta'}(\cdot \mid c_{\mathrm{D}})}
    \left[
        \left\|
        v_{\theta}(x_t,t \mid c_{\mathrm{E}})
        -
        v_{\theta'}(x_t,t \mid c_{\mathrm{D}})
        \right\|_2^2
    \right].
    \label{eq:off_policy}
\end{align}
This objective is stable because sampled states do not depend on the student~\cite{Lipman2024FlowMG}, but it constrains the student only on teacher trajectories, so errors may compound during student rollouts, a familiar issue in off-policy imitation and distillation settings~\cite{Ross2011DAgger,Agarwal2023OnPolicyDO}.

\paragraph{On-policy distillation}
To reduce this mismatch, we evaluate teacher--student discrepancy on student trajectories. Define
\begin{align}
    \ell_{\theta}(x_t,t;c_{\mathrm{E}},c_{\mathrm{D}})
    =
    \left\|
        v_{\theta}(x_t,t \mid c_{\mathrm{E}})
        -
        v_{\theta'}(x_t,t \mid c_{\mathrm{D}})
    \right\|_2^2.
    \label{eq:local_matching_loss}
\end{align}
The on-policy objective is
\begin{align}
    \mathcal{L}_{\mathrm{on}}
    =
        \mathbb{E}_{(x_t,t) \sim p_{\theta}(\cdot \mid c_{\mathrm{E}})}
    \left[
        \ell_{\theta}(x_t,t;c_{\mathrm{E}},c_{\mathrm{D}})
    \right]
    =
    \mathbb{E}_{\tau \sim p_{\theta}(\cdot \mid c_{\mathrm{E}})}
    \left[
        \int_0^1
        \ell_{\theta}(x_t,t;c_{\mathrm{E}},c_{\mathrm{D}})
        \, dt
    \right].
    \label{eq:on_policy}
\end{align}

Unlike $\mathcal{L}_{\mathrm{off}}$, $\mathcal{L}_{\mathrm{on}}$ depends on $\theta$ through both the velocity field and the student rollout distribution. Let
\begin{align}
    C_{\theta}(\tau)
    =
    \int_0^1
    \ell_{\theta}(x_t,t;c_{\mathrm{E}},c_{\mathrm{D}})
    \, dt
\end{align}
denote the trajectory-level distillation cost. For deterministic ODE sampling, $p_\theta(\tau\mid c_{\mathrm E})$ should be understood as the pushforward distribution induced by the base noise; in practice, we optimize the corresponding finite-step stochastic or flow-matching RL surrogate described in Sec.~\ref{sec:theoretical_background}. This notation motivates the score-function decomposition~\cite{Williams1992REINFORCE}
\begin{align}
    \nabla_{\theta} \mathcal{L}_{\mathrm{on}}
    =
    \mathbb{E}_{\tau \sim p_{\theta}(\cdot \mid c_{\mathrm{E}})}
    \left[
        C_{\theta}(\tau)
        \nabla_{\theta}
        \log p_{\theta}(\tau \mid c_{\mathrm{E}})
    \right]
    +
    \mathbb{E}_{\tau \sim p_{\theta}}
    \left[
        \nabla_{\theta} C_{\theta}(\tau)
    \right].
    \label{eq:on_policy_gradient_decomposition}
\end{align}
The first term changes the likelihood of trajectories according to their teacher--student discrepancy and has the form of a policy-gradient update with negative reward $-C_\theta(\tau)$. The second is direct vector-field regression on student states.
We then show that matching teacher velocity on student states, under shared initial noise, bounds student--teacher trajectory drift.
\begin{proposition}[Informal on-policy control]
\label{prop1}
Fix a paired condition $(c_{\mathrm E},c_{\mathrm D})$. Assume the teacher velocity field is $L$-Lipschitz in $x$ and that the student and teacher flows are initialized from the same base noise $x_0 \sim p_0$. If the student matches the teacher's velocity field on its own trajectories, namely
\[
    \mathbb{E}_{x_0\sim p_0}
    \left[
    \int_0^1
    \ell_\theta(x_t^\theta,t;c_{\mathrm E},c_{\mathrm D})
    \,dt
    \right]
    \leq
    \varepsilon^2,
\]
then the terminal distributions induced by the student and teacher are close. In particular, under the natural coupling given by the shared initial noise,
\begin{align}
    W_2
    \left(
        p_{\theta}(x_1 \mid c_{\mathrm{E}}),
        p_{\theta'}(x_1 \mid c_{\mathrm{D}})
    \right)
    \le 
    e^{L} \varepsilon,
\end{align}
where $\varepsilon\ge 0$.
\end{proposition}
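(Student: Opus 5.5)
We couple the two flows through the shared base noise and run a Grönwall argument on the pathwise gap. Fix $x_0\sim p_0$. Let $x_t^\theta$ solve Eq.~\eqref{eq:ODE} with the student field under $c_{\mathrm E}$, and let $x_t^{\theta'}$ solve the analogous teacher ODE under $c_{\mathrm D}$, both starting from $x_0$. Set $e_t = x_t^\theta - x_t^{\theta'}$, so that $e_0=0$. The pair $(x_1^\theta,x_1^{\theta'})$ is a coupling of the two terminal laws, hence
\[
W_2^2\bigl(p_\theta(x_1\mid c_{\mathrm E}),p_{\theta'}(x_1\mid c_{\mathrm D})\bigr)\le \mathbb{E}_{x_0}\|e_1\|^2 .
\]
It therefore suffices to show $\mathbb{E}\|e_1\|^2\le e^{2L}\varepsilon^2$.

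For the pathwise estimate, add and subtract the teacher velocity evaluated at the student state:
\[
\dot e_t = \bigl[v_\theta(x_t^\theta,t\mid c_{\mathrm E})-v_{\theta'}(x_t^\theta,t\mid c_{\mathrm D})\bigr]+\bigl[v_{\theta'}(x_t^\theta,t\mid c_{\mathrm D})-v_{\theta'}(x_t^{\theta'},t\mid c_{\mathrm D})\bigr].
\]
The first bracket has norm $\delta_t:=\sqrt{\ell_\theta(x_t^\theta,t;c_{\mathrm E},c_{\mathrm D})}$, which is exactly the on-policy discrepancy appearing in the hypothesis. The second bracket has norm at most $L\|e_t\|$ by the Lipschitz assumption on the teacher. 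This is where on-policy evaluation matters: the student's error is measured on the student's own states, so only the teacher's regularity is needed.

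Next we bound $u_t=\|e_t\|$. Since $u$ is absolutely continuous, $\dot u_t\le \delta_t+Lu_t$ almost everywhere (the point $u_t=0$ can be handled with $\sqrt{\|e_t\|^2+\eta}$ and $\eta\to0$). Grönwall with $u_0=0$ gives
\[
u_1\le \int_0^1 e^{L(1-t)}\delta_t\,dt\le e^{L}\int_0^1\delta_t\,dt\le e^{L}\Bigl(\int_0^1\delta_t^2\,dt\Bigr)^{1/2},
\]
where the last step is Cauchy--Schwarz on $[0,1]$. Squaring and taking the expectation over $x_0$ yields $\mathbb{E}\|e_1\|^2\le e^{2L}\,\mathbb{E}\int_0^1\ell_\theta\,dt\le e^{2L}\varepsilon^2$. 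Taking square roots gives the claim.

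The main obstacle is justifying the differential inequality for $\|e_t\|$ rigorously rather than the bound itself. This needs existence and uniqueness of both flows, for which we assume the student field is also locally Lipschitz (only its mismatch enters the bound). It also needs measurability in $x_0$ so that Fubini applies when exchanging $\mathbb{E}$ and $\int dt$. If a cleaner route is wanted, work directly with $\tfrac{d}{dt}\|e_t\|^2\le 2\|e_t\|\delta_t+2L\|e_t\|^2$, which avoids differentiating the norm at zero. That route gives the slightly weaker constant $e^{L+1/2}$ via Young's inequality, so the first route is preferred to obtain exactly $e^{L}$.
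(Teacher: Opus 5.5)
Your proposal is correct and follows essentially the same route as the paper's proof: you couple the two flows through the shared noise, add and subtract $v_{\theta'}(x_t^\theta,t\mid c_{\mathrm D})$, apply Gr\"onwall to $\|x_t^\theta-x_t^{\theta'}\|$ with zero initial gap, use Cauchy--Schwarz on $[0,1]$, and conclude with the coupling bound on $W_2$. Your remarks on well-posedness of the student flow, measurability for Fubini, and the behavior of the norm at $e_t=0$ cover points the paper passes over silently, so they make the same argument more careful.
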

The proof is a standard Grönwall argument (see Appendix).

\paragraph{Distillation as a reward}
Eq.~\eqref{eq:on_policy_gradient_decomposition} suggests an RL view: trajectories with low teacher--student discrepancy should become more likely. We therefore define
\begin{align}
    r_{\mathrm{distill}}(\tau)
    =
    -
    \int_0^1
    \left\|
        \operatorname{sg}\!\left[
        v_{\theta}(x_t,t \mid c_{\mathrm{E}})
        \right]
        -
        v_{\theta'}(x_t,t \mid c_{\mathrm{D}})
    \right\|_2^2
    dt,
    \label{eq:distillation_reward}
\end{align}
where $\operatorname{sg}[\cdot]$ denotes stop-gradient. Detaching the student makes this term act through trajectory likelihood rather than direct velocity-field backpropagation, up-weighting rollouts whose dynamics agree with the Demonstrator.

\paragraph{Reinforcement learning for task solving}
Pure distillation imitates the teacher but cannot systematically improve beyond it. Since eq.~\ref{eq:on_policy_gradient_decomposition} has a score-function form, we add task-level feedback. Let $
    r_{\mathrm{task}}(\tau;\mathcal{I},\mathcal{T})$
denote whether the generated video solves $\mathcal{T}$ from $\mathcal{I}$, as judged by a VLM.

The total reward is then
\begin{align}
    R(\tau)
    =
    \lambda_{\mathrm{task}}
    r_{\mathrm{task}}(\tau;\mathcal{I},\mathcal{T})
    +
    \lambda_{\mathrm{distill}}
    r_{\mathrm{distill}}(\tau),
    \label{eq:total_reward}
\end{align}
with $\lambda_{\mathrm{task}}>0$ and $\lambda_{\mathrm{distill}}>0$ controlling task success versus teacher agreement.

The teacher now acts as a stabilizing prior rather than a hard target: task reward can favor student trajectories that better satisfy the instruction even when they deviate from the teacher.

\paragraph{Optimization objective}
For the direct regression component $\mathbb{E}_{\tau \sim p_{\theta}}
    \left[
        \nabla_{\theta} C_{\theta}(\tau)
    \right]$ in eq.~\eqref{eq:on_policy_gradient_decomposition}, full backpropagation through all sampler steps is impractical. We therefore use the anchor loss
\begin{align}
    \mathcal{L}_{\mathrm{anchor}}
    =
    \mathbb{E}_{\tau \sim p_{\theta}(\cdot \mid c_{\mathrm{E}})}
    \left[
        \int_0^1
        \left\|
        v_{\theta}(\bar{x}_t,t \mid c_{\mathrm{E}})
        -
        v_{\theta'}(\bar{x}_t,t \mid c_{\mathrm{D}})
        \right\|_2^2
        dt
    \right],
    \label{eq:anchor_loss}
\end{align}

where $\bar{x}_t=\operatorname{sg}(x_t)$ is a sampled state treated as fixed. The reward term selects trajectories; the anchor keeps the student velocity close to the teacher on those states.

We optimize the student with a policy-gradient objective for flow-matching models, $\mathcal{L}_{\text{RL}}$, using eq.~\eqref{eq:total_reward} and implement it via several RL approaches~\cite{Liu2025FlowGRPO,Xue2025AdvantageWM,Zheng2025DiffusionNFTOD}: groups of rollouts for the same task define relative advantages that increase the likelihood of higher-reward rollouts, see Sec.~\ref{sec:theoretical_background}.

Finally, we combine reward optimization with teacher anchoring with $\beta_d>0$ in our \emph{full self-distillation objective}
\begin{align}
    \mathcal{L}_{\mathrm{final}}
    =
    \mathcal{L}_{\mathrm{RL}}
    +
    \beta_d
    \mathcal{L}_{\mathrm{anchor}}.
    \label{eq:final_objective}
\end{align}

Self-distillation transfers detailed execution knowledge, RL improves task success, and the Demonstrator anchor prevents uncontrolled drift while still allowing the Executor to surpass the Demonstrator under the chosen reward and evaluation protocol.

%% file: sections/04_experiments.tex
\section{Experiments}
\label{sec:experiments}

We evaluate our method along three main axes. First, we compare the proposed self-distillation variants and examine whether on-policy self-distillation provides a competitive alternative to standard off-policy self-distillation. Second, we study the interaction between self-distillation and reinforcement learning, asking whether the student can improve beyond the teacher's capabilities under VLM-based evaluation. Finally, we evaluate transfer to robotic tasks.

\subsection{Experimental Setup}
\label{sec:experimental_setup}

We operate in the Advantage-Weighted Matching (AWM) setting, a variant of GRPO better suited to flow-matching models~\cite{Xue2025AdvantageWM}. Unless otherwise stated, all experiments use a group size of 24 and a batch size of 32, with LTX-2~\cite{HaCohen2026LTX2EJ} as the baseline model. Training alternates between on-policy rollout generation, reward computation, and joint policy optimization with self-distillation. Additional implementation details are provided in Sec.~\ref{sec:appendix_implementation_details}.

\paragraph{Rewards.}

For experiments involving VLM-based reward signals, we use two complementary components: a task-completion reward and a consistency reward. Task success is evaluated with Qwen3.5-27B~\cite{qwen35blog}, which produces a binary judgment indicating whether a generated video completes the specified task. We define the reward as the log-probability difference
\[
R(x) = \log p_\text{VLM}(\text{`yes'} \mid x) - \log p_\text{VLM}(\text{`no'} \mid x),
\]
which incorporates both the predicted outcome and the model's uncertainty. However, optimizing this signal alone can lead to reward hacking, such as unrealistic object appearances or disappearances. Inspired by~\cite{Agarwal2025CosmosWF}, we introduce a consistency reward that penalizes violations of physical plausibility and temporal coherence. Full prompts and implementation details are provided in Appendix Sec.~\ref{sec:appendix_reward_prompts} and Boxes~\ref{fig:task_prompt}--\ref{fig:consistency_prompt}.

\subsubsection{WorldTasks Dataset}

We construct a dataset of 20,000 images from video-game environments and real-world scenes, largely based on MiraData~\cite{Ju2024MiraDataAL}. Standard filtering removes low-quality frames and those with limited agentic potential (i.e., no meaningful interaction possible).
For each image, we pre-generate eight task--solution pairs using Qwen3.5-27B, covering diverse instruction-following scenarios across environments and task complexities. After filtering, the resulting training split contains 146,440 task prompts. Further details are provided in Appendix Sec.~\ref{sec:appendix_further_details_dataset}.

To support learning beyond initial frames and VLM annotations, tasks are designed to be unambiguous yet general. The world model represents all visible entities, not just an egocentric view, enabling settings such as ego-exo modeling and general planning. Instructions are formatted as ``[Agent description]: [Task instruction]'' to specify the acting agent.

We further characterize the final prompt set using a prompt-only Qwen3.5-27B taxonomy. As shown in Fig.~\ref{fig:worldtasks_prompt_composition}, addressed agents are diverse but concentrated around first-person views (50.7\%) and human characters (39.0\%), with additional coverage of vehicles (5.2\%), inanimate objects or landmarks (2.4\%), creatures (1.9\%), animals (0.6\%), and crowds (0.1\%). Task types are similarly broad, dominated by positioning (22.2\%), navigation (20.3\%), object interaction (19.0\%), and perception (14.3\%). The remaining major categories include combat actions (6.6\%), compound instructions (4.6\%), vehicle control (4.1\%), UI interaction (3.7\%), and other long-tail tasks (5.1\%).

\begin{figure*}[t]
    \centering
    \begin{subfigure}[t]{0.39\textwidth}
        \centering
        \includegraphics[width=\linewidth]{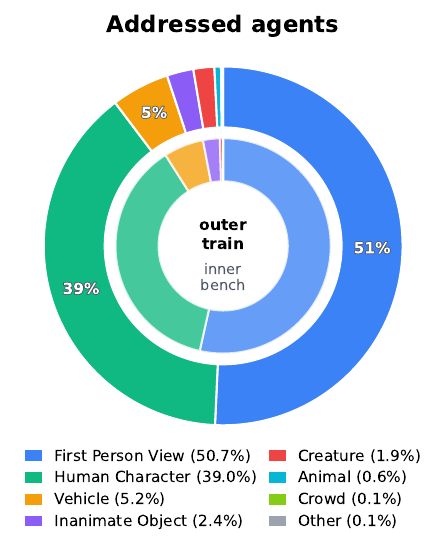}
        \caption{Addressed-agent categories.}
        \label{fig:worldtasks_agent_distribution}
    \end{subfigure}
    \hfill
    \begin{subfigure}[t]{0.59\textwidth}
        \centering
        \includegraphics[width=\linewidth]{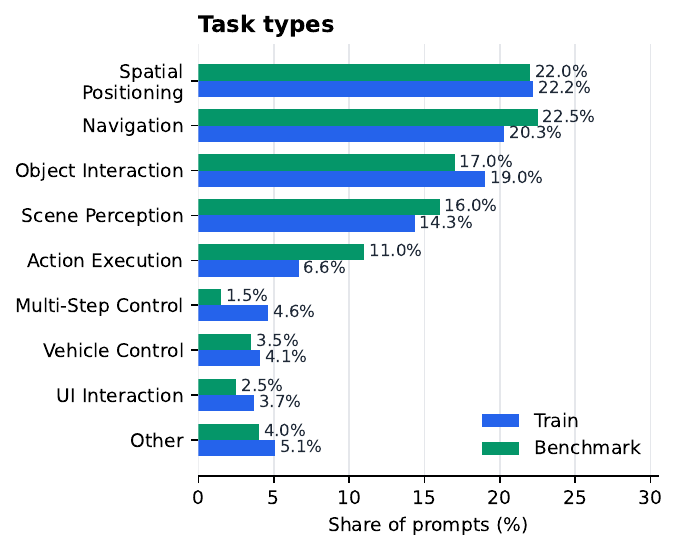}
        \caption{Task-type distribution.}
        \label{fig:worldtasks_task_distribution}
    \end{subfigure}
    \caption{WorldTasks prompt composition for the training split and WorldTasks-Bench.}
    \label{fig:worldtasks_prompt_composition}
\end{figure*}

\subsubsection{WorldTasks Benchmark}

We first study the core properties of \method in a controlled setting. The corresponding benchmark, \textbf{WorldTasks-Bench}, consists of 200 randomly selected image--task pairs. Each generated video is evaluated by a VLM according to three criteria: (1) whether the task is completed, (2) whether the correct agent attempts the task, and (3) whether the video exhibits consistent physics and realistic dynamics. The evaluation prompts are provided in Appendix Sec.~\ref{sec:evaluation_prompts}.

WorldTasks-Bench preserves the main structure of the training set: 53.5\% of benchmark prompts are first-person, 37.5\% address human characters, and 6.0\% address vehicles. The benchmark task mix is also balanced across the dominant task families: navigation tasks require moving through the environment (22.5\%), spatial positioning tasks require precise placement or alignment (22.0\%), object interaction tasks involve manipulating scene objects (17.0\%), scene perception tasks require inspecting or focusing on visual elements (16.0\%), action execution tasks involve concrete embodied actions such as combat or dodging (11.0\%), vehicle control tasks require maneuvering a vehicle or rider (3.5\%), UI interaction tasks involve menus or on-screen interface elements (2.5\%), multi-step control tasks require compound action sequences (1.5\%), and other long-tail tasks account for the remaining 4.0\%.

We report three metrics throughout all experiments. The \emph{Task Score} measures the success rate of task completion as judged by the VLM\@. The \emph{Agent Score} captures whether the intended agent engages in goal-directed interaction within the scene. The \emph{Realism Score} evaluates physical plausibility and temporal coherence. Because VLM-based evaluation can occasionally fail (e.g., due to malformed outputs or API errors), videos for which the VLM is unable to produce an assessment are discarded from the analysis; Appendix Tab.~\ref{tab:worldtasks_eval_denominators} reports the corresponding denominators and failure rates.

\subsection{On-policy vs. Off-policy Self-Distillation}

We begin by comparing the three self-distillation variants introduced in Sec.~\ref{sec:method}: off-policy self-distillation, on-policy self-distillation using only the anchor loss between student and teacher, and the full on-policy self-distillation objective in Eq.~\eqref{eq:final_objective}. In Fig.~\ref{fig:ablation_results}, we report evaluation results every 10 training steps over 100 training steps. We show both the average WorldTasks score and PickScore~\cite{Kirstain2023PickaPicAO}, which measures overall generation quality.

All three methods yield substantial improvements. However, after approximately 60 training steps, off-policy self-distillation saturates, whereas both on-policy variants continue to improve on both metrics and ultimately surpass the off-policy baseline. The full on-policy self-distillation objective, which includes the distillation reward introduced in Eq.~\ref{eq:distillation_reward}, achieves the best overall performance.

\begin{figure}[t]
    \centering
    \begin{subfigure}[t]{0.60\linewidth}
        \centering
        \includegraphics[width=\linewidth]{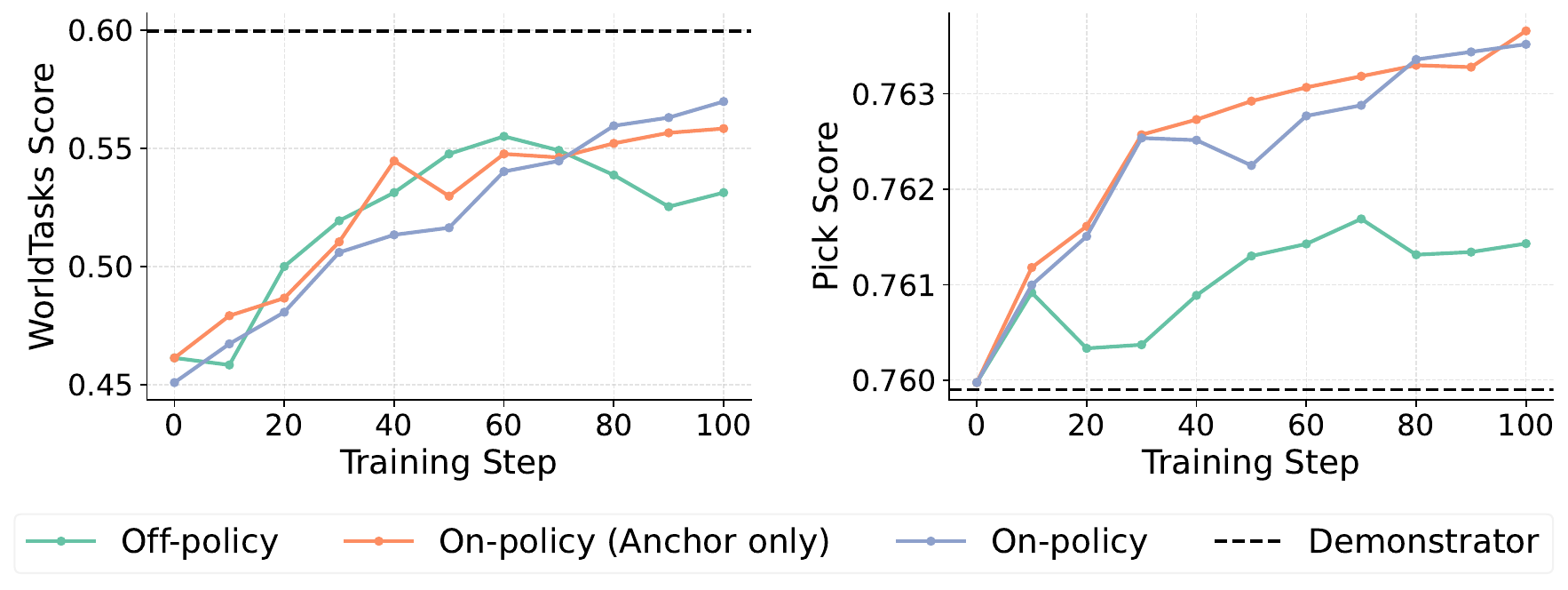}
        \caption{}
    \end{subfigure}
    \hfill
    \begin{subfigure}[t]{0.38\linewidth}
        \centering
        \includegraphics[width=\linewidth]{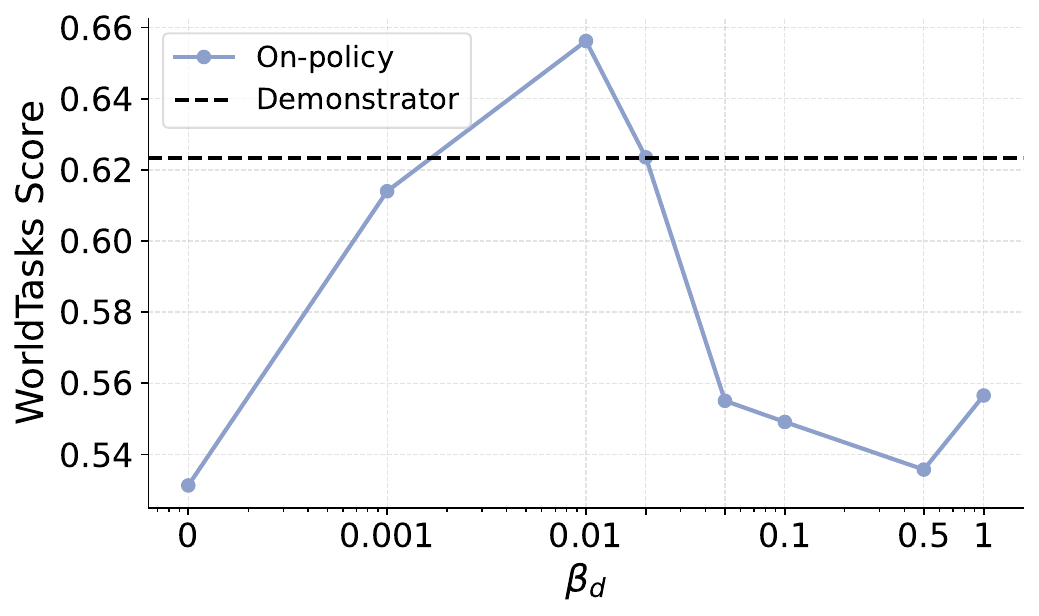}
        \caption{}
    \end{subfigure}
    \caption{
    \textbf{Two ablations on \benchname.}
    Left: Ablation on self-distillation methods, showing average WorldTasks score and PickScore.
    Right: Ablation of average WorldTasks score vs.\ $\beta_d$.
    }
    \label{fig:ablation_results}
\end{figure}

\begin{figure}[t]
    \centering
    \includegraphics[width=0.98\linewidth]{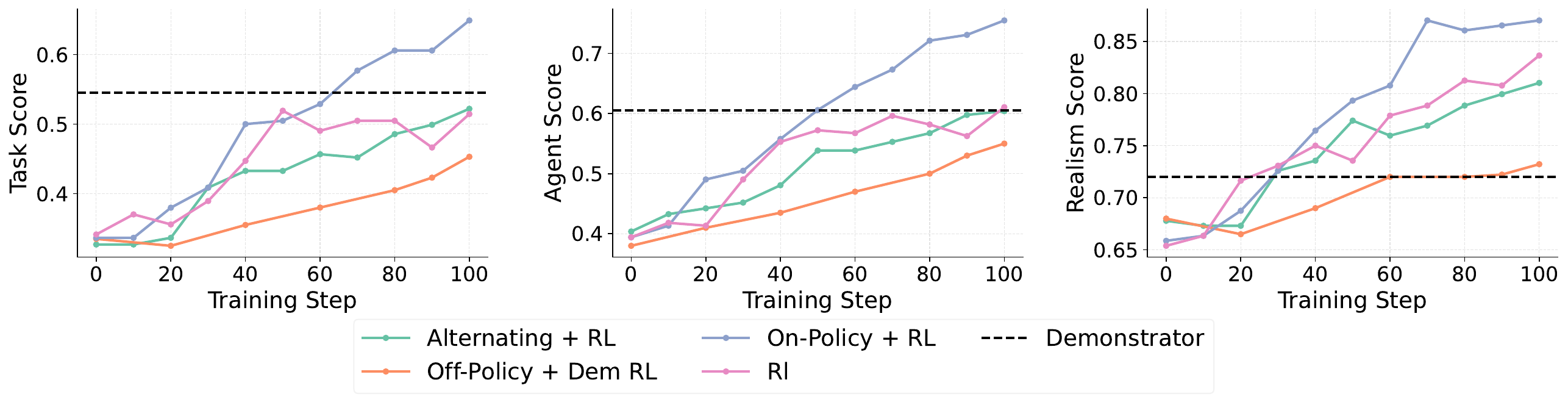}
        \caption{
    \textbf{Ablation across training settings on \benchname.}
    We report the three evaluation dimensions.
    }
    \label{fig:rl_ablation}
\end{figure}

\subsection{Surpassing the Demonstrator with RL Training}

We investigate whether augmenting self-distillation with reinforcement learning (RL) enables the student to surpass the demonstrator's task-solving performance. To this end, we consider four training settings: (i) standard RL without an anchor loss, (ii) on-policy self-distillation with RL applied to the student, (iii) off-policy distillation with RL applied to the teacher, and (iv) an alternating optimization strategy in which teacher and student updates are interleaved according to a fixed schedule. The full procedure is detailed in Alg.~\ref{alg:alt_grpo_demo_simple} (see Appendix). As an additional baseline, we include the \emph{Demonstrator} setting, in which reasoning is entirely outsourced to the VLM for solution generation.

We evaluate all approaches on the three components of \benchname: task-solving performance, agent correctness, and physical consistency. The results are shown in Fig.~\ref{fig:rl_ablation}.

Our results show that combining on-policy self-distillation with RL substantially improves task-solving performance and enables the student to surpass the demonstrator under the VLM-based benchmark. In contrast, standard RL alone achieves comparable performance during early training, up to approximately 50 steps, but quickly plateaus and yields no further gains. The remaining approaches exhibit slower learning dynamics and do not reach the same level of performance.

\subsection{Comparison to Baselines}
\label{sec:comparison_to_baselines}

We compare \method against several baselines. We first examine whether \method generalizes across different base models, reward functions, and RL optimization settings. To this end, we use HunyuanVideo-1.5~\cite{hunyuanvideo2025} as the base model, Qwen3-VL-8B as the reward model, and FlowGRPO~\cite{Liu2025FlowGRPO} as the RL optimizer, training for 25 steps.

For the LTX-2~\cite{HaCohen2026LTX2EJ} 8-step model, using the setup described in Sec.~\ref{sec:experimental_setup}, we compare against multiple baselines. First, we consider direct solution generation by outsourcing reasoning to a VLM\@. In this setting, the first frame and task description are provided to the VLM, which generates an image-to-video solution prompt that is then used for video generation (+\textbf{VLM}). We also investigate whether unannotated videos can be converted into task--video pairs by labeling them with corresponding tasks and subsequently fine-tuning the model via supervised fine-tuning (+\textbf{SFT}). Finally, we compare against \method. All results on \benchname are reported in Tab.~\ref{tab:world_tasks_bench_results}.

After only 25 training steps, applying \method to the HY1.5 baseline improves all reported metrics, highlighting its robustness across training settings. Applying \method to the 8-step distilled LTX-2 model yields larger gains in task completion and agent correctness while also improving physical consistency. In contrast, the SFT baseline provides little to no improvement and, in some cases, degrades performance. We hypothesize that this is due to limited task diversity in the automatically annotated data: many tasks are overly simple or repetitive, such as ``walk forward'', and therefore fail to capture meaningful real-world interaction scenarios.

\begin{table}[t]
\caption{\textbf{Comparison of \method against baselines on \benchname.} We report task completion, agent correctness, physical consistency, their average, and end-to-end inference time. \method consistently improves task and agent scores across both base models while preserving the inference cost of the underlying model. Dark blue indicates improvements obtained with \method over the corresponding base model, while bold black denotes the best overall result. $^{*}$ Trained with GRPO for 25 steps.}
\label{tab:world_tasks_bench_results}
\centering
\small
\setlength{\tabcolsep}{4pt}
\begin{tabularx}{\textwidth}{
>{\raggedright\arraybackslash}p{0.31\textwidth}
>{\centering\arraybackslash}X
>{\centering\arraybackslash}X
>{\centering\arraybackslash}X
>{\centering\arraybackslash}X
>{\centering\arraybackslash}X
}
\toprule
\textbf{Method} &
\textbf{Task} $\uparrow$ &
\textbf{Agent} $\uparrow$ &
\textbf{Consist.} $\uparrow$ &
\textbf{Avg.} $\uparrow$ &
\textbf{E2E (s)} $\downarrow$ \\
\midrule
HY1.5 & 0.464 & 0.540 & 0.780 & 0.597 & 112 \\
\rowcolor{gray!10}
\textbf{\textcolor{wmsdblue}{HY1.5+\textit{WMSD}$^{*}$}} &
\textbf{\textcolor{wmsdblue}{0.574}} &
\textbf{\textcolor{wmsdblue}{0.630}} &
\textbf{\textcolor{wmsdblue}{0.828}} &
\textbf{\textcolor{wmsdblue}{0.673}} &
112 \\
\midrule
LTX-2 & 0.315 & 0.395 & 0.690 & 0.467 & 52.2 \\
LTX-2+SFT & 0.292 & 0.389 & 0.682 & 0.454 & 52.2 \\
\rowcolor{gray!10}
\textbf{\textcolor{wmsdblue}{LTX-2+\textit{WMSD}$^{*}$}} &
\textbf{\textcolor{wmsdblue}{0.452}} &
\textbf{\textcolor{wmsdblue}{0.500}} &
\textbf{\textcolor{wmsdblue}{0.693}} &
\textbf{\textcolor{wmsdblue}{0.548}} &
52.2 \\
\midrule
LTX-2 (8-Step) & 0.285 & 0.391 & 0.694 & 0.455 & 10.1 \\
LTX-2 (8-Step)+VLM & 0.495 & 0.572 & 0.732 & 0.598 & 10.5 \\
\rowcolor{gray!10}
\textbf{LTX-2 (8-Step)+\method} &
\textbf{0.605} &
\textbf{0.691} &
\textbf{0.882} &
\textbf{0.726} &
\textbf{10.1} \\
\bottomrule
\end{tabularx}
\end{table}

The quantitative improvements are further reflected in qualitative comparisons shown in Fig.~\ref{fig:qualitative_mixed_examples}. Across both LTX-2 and HunyuanVideo-1.5, videos generated with \method exhibit more accurate task execution, stronger agent--environment interaction, and improved physical consistency compared to the corresponding base models. In particular, \method produces trajectories that better align with the intended task objectives while maintaining coherent motion and scene dynamics over time.
\begin{figure}[t]
    \centering
    \begin{subfigure}[t]{0.95\linewidth}
        \centering
        \includegraphics[width=\linewidth]{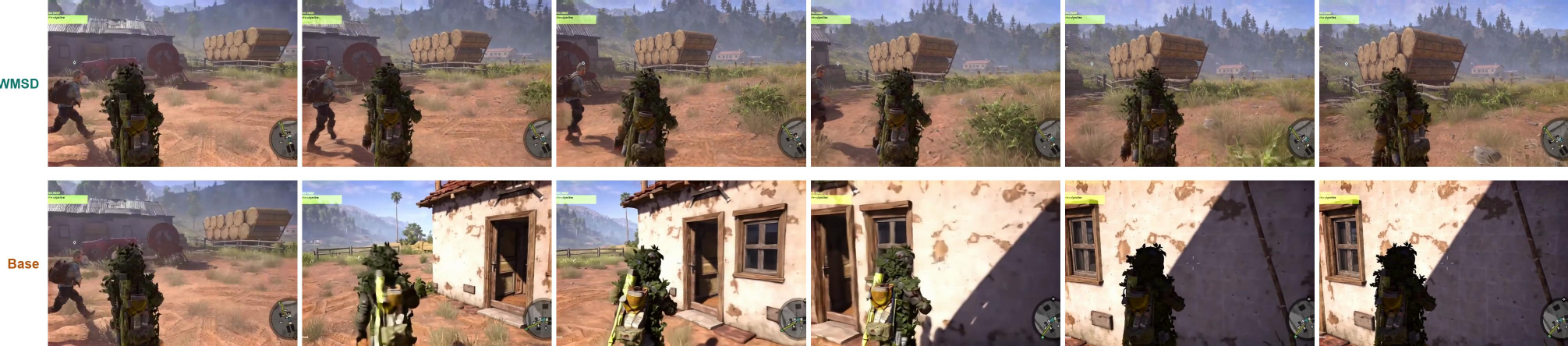}
        \caption{\textit{[First-person view]: Move to the right to examine the distant house.}}
    \end{subfigure}
    \vspace{0.1cm}
    \begin{subfigure}[t]{0.95\linewidth}
        \centering
        \includegraphics[width=\linewidth]{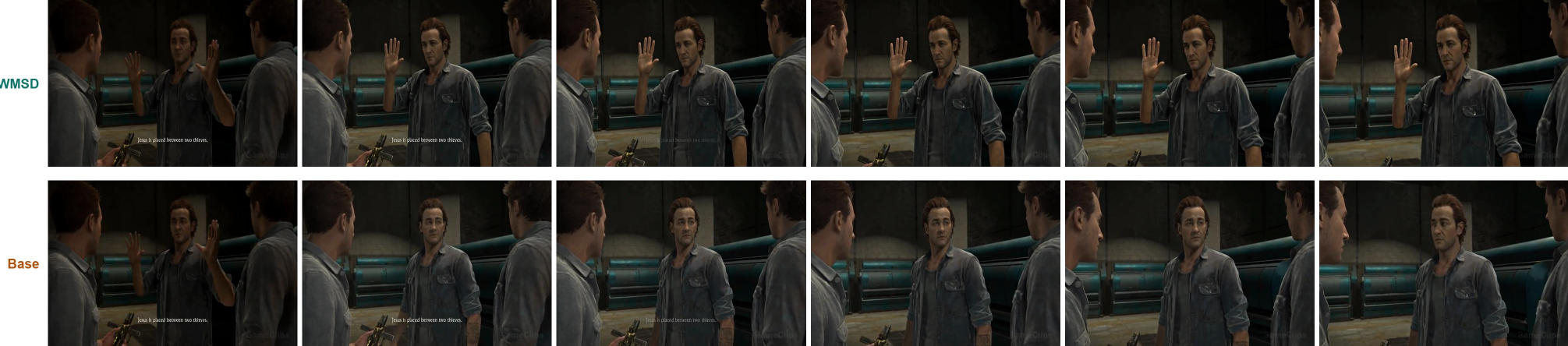}
        \caption{\textit{[Man in center]: Lower one hand slowly.}}
    \end{subfigure}
    \vspace{0.1cm}
    \begin{subfigure}[t]{0.95\linewidth}
        \centering
        \includegraphics[width=\linewidth]{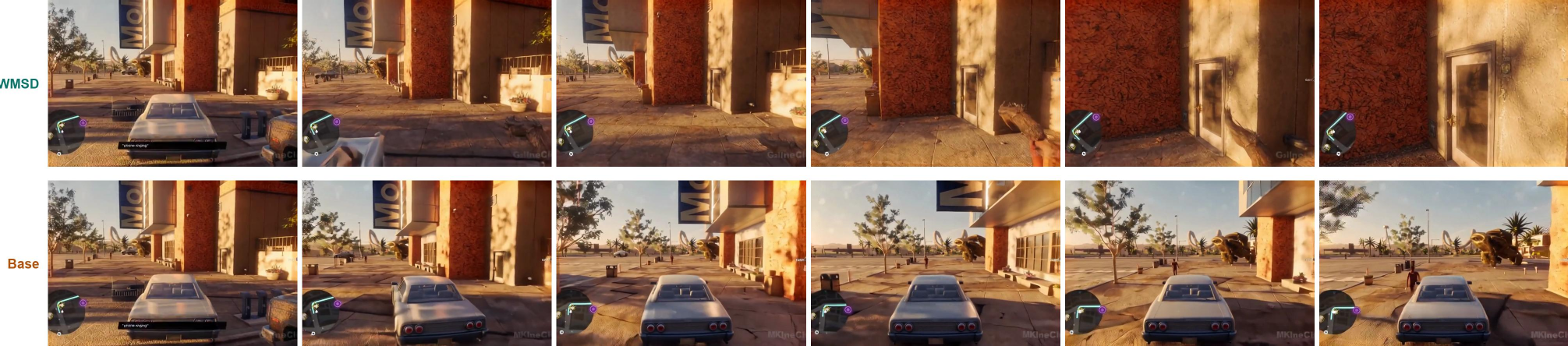}
        \caption{\textit{[First-person view]: Walk toward the building entrance on the right.}}
    \end{subfigure}
    \vspace{0.1cm}
    \begin{subfigure}[t]{0.95\linewidth}
        \centering
        \includegraphics[width=\linewidth]{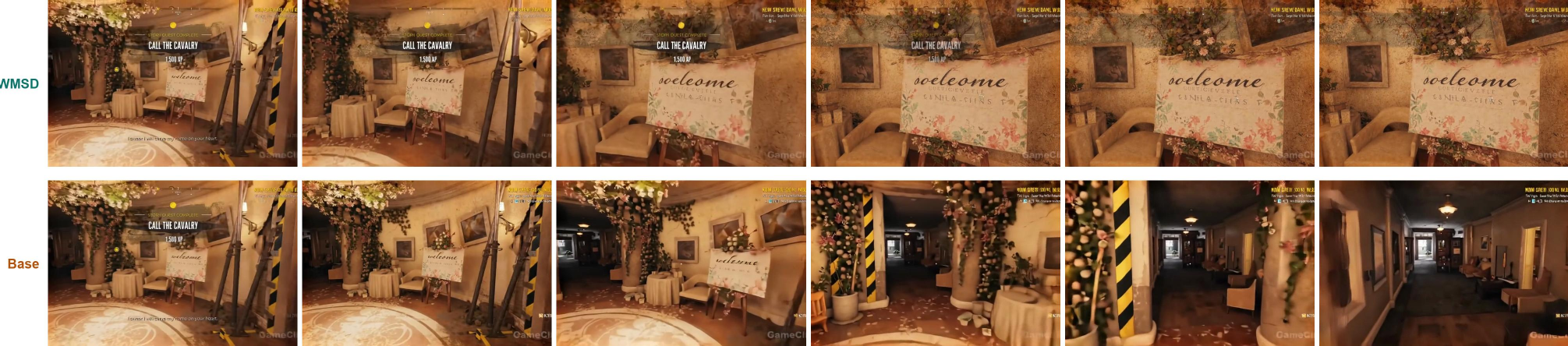}
        \caption{\textit{[First-person view]: Turn right to face the welcome sign on the easel.}}
    \end{subfigure}
    \vspace{0.1cm}
    \begin{subfigure}[t]{0.95\linewidth}
        \centering
        \includegraphics[width=\linewidth]{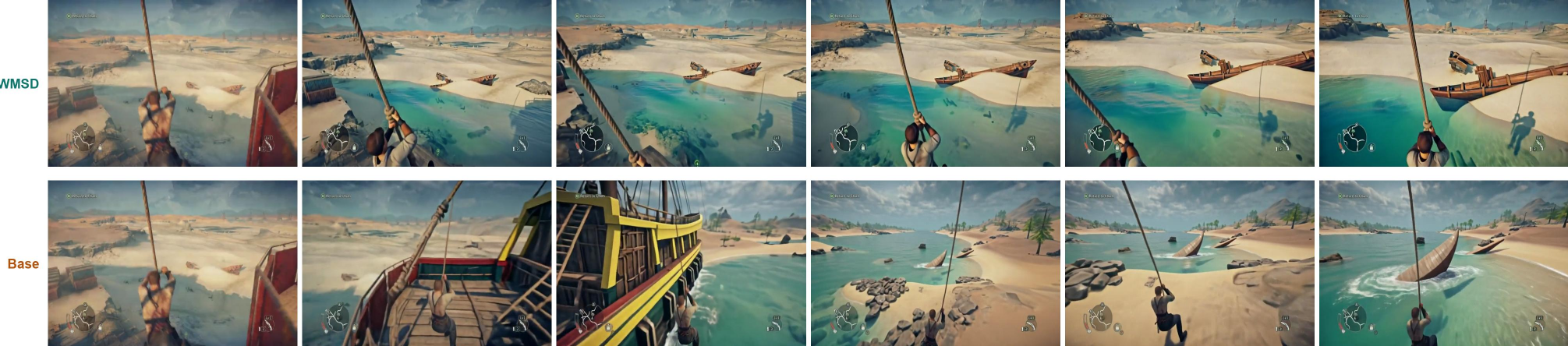}
        \caption{\textit{[First-person view]: Use the rope to swing toward the nearest visible shipwreck's hull.}}
    \end{subfigure}

    \caption{Qualitative comparisons between \method and the base model across LTX-2 and HunyuanVideo-1.5. Each subfigure shows six uniformly sampled frames from the generated videos.}
    \label{fig:qualitative_mixed_examples}
\end{figure}

\subsection{Performance Breakdown}
\label{sec:performance_breakdown}
We investigate which task and agent categories \methodname handles best. We stratify \benchname by the VLM-derived prompt taxonomy introduced in Sec.~\ref{sec:experimental_setup} and analyze all categories that cover more than 5\% of the benchmark. This yields five task types (navigation, positioning, object interaction, perception, combat action) and three addressed-agent types (first-person view, human character, vehicle). Fig.~\ref{fig:worldtasks_performance_breakdown} reports Task Score across task types and Agent Score across addressed-agent types.

\begin{figure}[t]
    \centering
    \includegraphics[width=\linewidth]{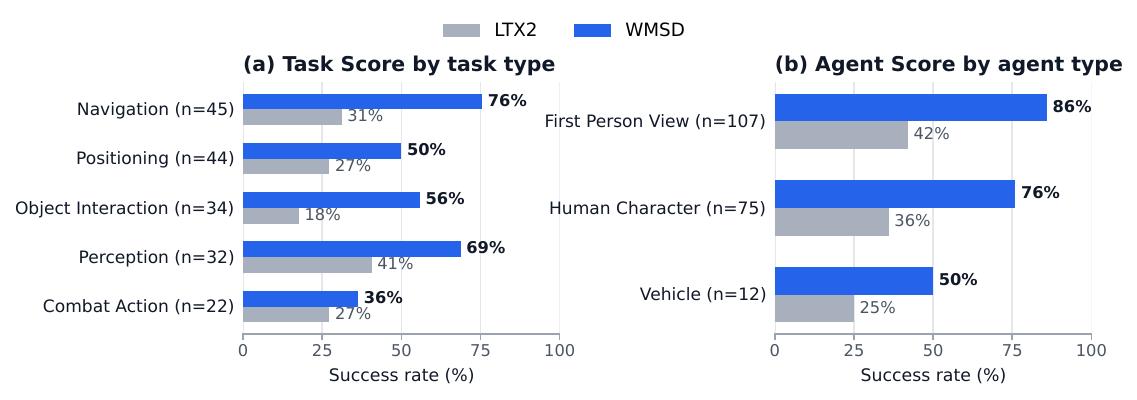}
    \caption{
    \textbf{Performance breakdown on \benchname.}
    Left: Task Score by task type. Right: Agent Score by addressed-agent type. We show all categories with more than 5\% benchmark support. Values are VLM-judged success rates in percent, with subgroup sizes shown in parentheses.
    }
    \label{fig:worldtasks_performance_breakdown}
\end{figure}

The breakdown shows that \methodname improves task completion most strongly for navigation (31.1\% to 75.6\%) and object interaction (17.6\% to 55.9\%), while also improving perception (40.6\% to 68.8\%), positioning (27.3\% to 50.0\%), and combat actions (27.3\% to 36.4\%). Agent grounding improves substantially for first-person prompts (42.1\% to 86.0\%) and human-character prompts (36.0\% to 76.0\%). Vehicle prompts remain more challenging, reaching 50.0\% Agent Score, but this slice contains only 12 examples and therefore should be interpreted as a small-support diagnostic rather than a primary trend.

\subsection{Ablation Studies}
\label{sec:ablation_studies}

\paragraph{Self-distillation strength.}
We study the effect of the self-distillation strength on the performance of \methodname by varying the Demonstrator anchor coefficient, $\beta_d$ in Eq.~\ref{eq:final_objective}, over the range $[0,1]$. As shown in Fig.~\ref{fig:ablation_results}, the best performance is obtained around $\beta_d = 0.01$. Both smaller and larger values perform worse: too little regularization weakens the distillation signal, whereas too much regularization dominates the RL objective and limits learning.

\paragraph{Consistency reward.}
We further investigate the effect of the additional consistency reward, which is designed to mitigate reward hacking. Without this reward, the model can exploit the VLM reward by producing implausible generations, such as objects appearing or disappearing without physical justification. The exact prompt used for this reward is provided in Box~\ref{fig:consistency_prompt}. Fig.~\ref{fig:consistency_pair} shows qualitative examples with and without the consistency reward.

\paragraph{Resolution and inference steps.}
Following prior work~\cite{Liu2025FlowGRPO, He2025TempFlowGRPOWT}, we decouple the number of denoising steps and the resolution used during training and evaluation to improve training efficiency. However, we find that this introduces a trade-off: lower generation quality during training increases the risk of reward hacking, especially in our setting where the VLM requires clear and unambiguous visual evidence to assign reliable rewards.

Additional ablations are provided in the Appendix.
\begin{figure}[t]
    \centering
    
    \begin{subfigure}{0.95\textwidth}
        \centering
        \includegraphics[width=\linewidth]{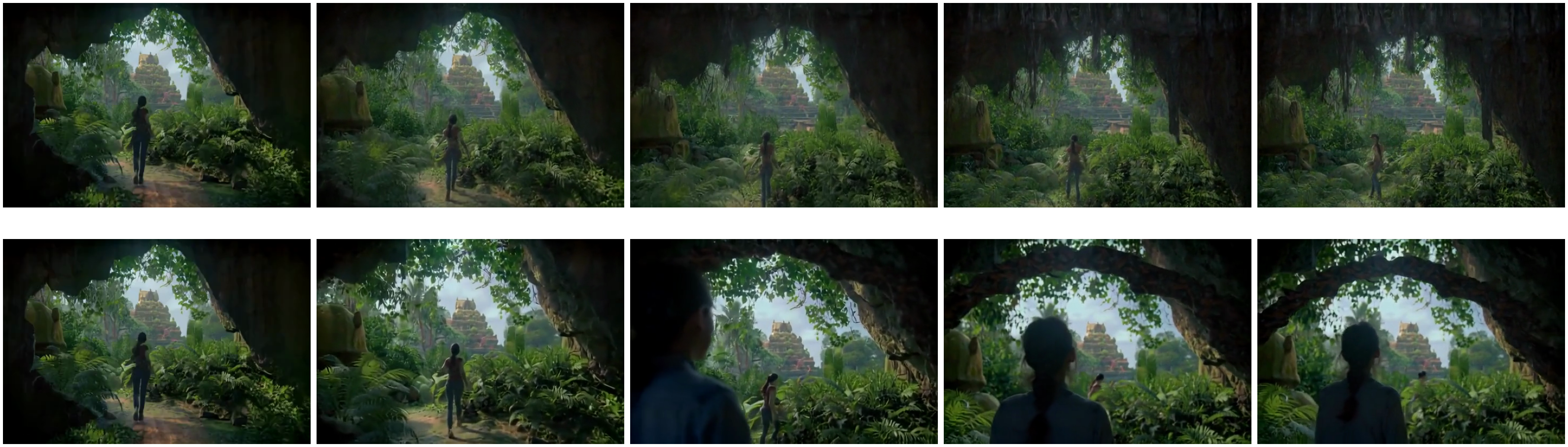}
        \caption{Prompt: ``\textit{[Lara Croft]: Turn left and position yourself directly in front of the central arch.}''}
        \label{fig:consistency_5}
    \end{subfigure}
    
    \vspace{0.5cm}
    
    \begin{subfigure}{0.95\textwidth}
        \centering
        \includegraphics[width=\linewidth]{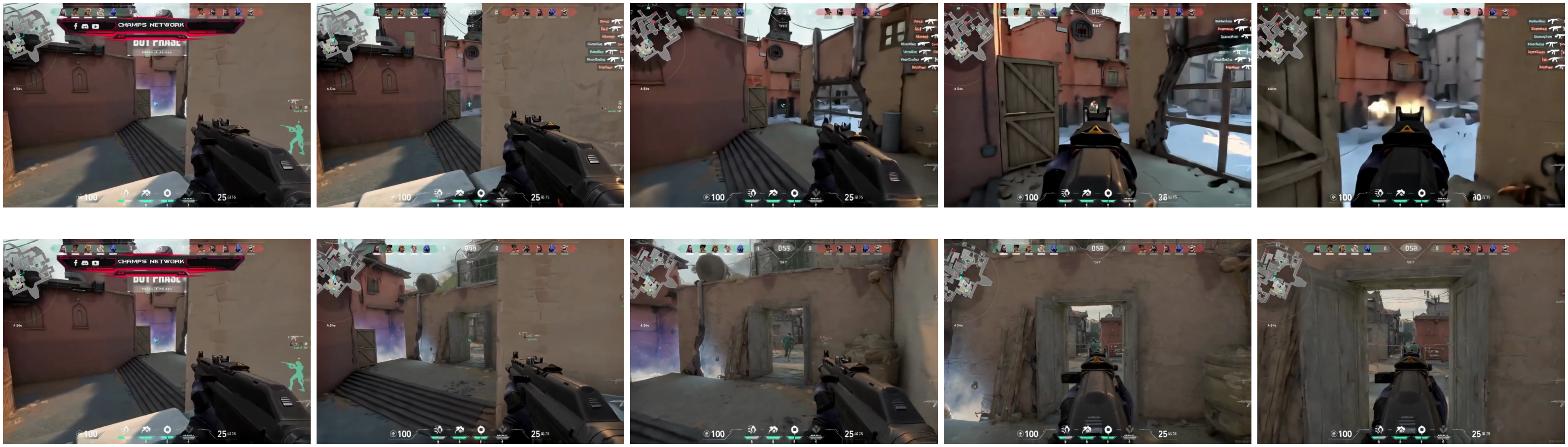}
       \caption{Prompt: ``\textit{[First-person view]: Aim the weapon at the doorway entrance.}''}
       \label{fig:consistency_6}
    \end{subfigure}
    
    \caption{Two examples: the first row uses the consistency reward, while the second row does not. The second row shows that the model generates (a) the arch + Lara Croft and (b) the doorway as a consequence of reward hacking.}
    \label{fig:consistency_pair}
\end{figure}

\subsection{Generalization to Robotic Tasks}
\label{sec:robotic_tasks}

An important application of world models for planning lies in robotics, where data collection is particularly expensive. We therefore evaluate whether \method trained on \datasetname can achieve competitive performance without task-specific supervision, compared to supervised fine-tuning (SFT) on the Gr00t dataset using the DreamGen benchmark~\cite{Jang2025DreamGenUG} (Tab.~\ref{tab:dreamgenbench}).

We compare our LTX-2-based model against several baselines, including HunyuanVideo (Huny)~\cite{Kong2024HunyuanVideoAS}, CogVideoX (CogX)~\cite{Hong2022CogVideoLP}, Wan~\cite{Wang2025WanOA}, and Cosmos~\cite{Agarwal2025CosmosWF}, across zero-shot and SFT settings.

We observe that, despite operating in a data-free regime, \method achieves performance comparable to SFT-trained Cosmos, while substantially improving over the LTX-2 baseline. 

%
%

\begin{table*}[t]
\caption{\textbf{Performance on the DreamGen benchmark.} We compare zero-shot and SFT baselines against \method. Despite not using task-specific supervision, \method achieves competitive performance with SFT-trained models. Best results are in bold, second-best underlined.}
\label{tab:dreamgenbench}
\centering
\small
\setlength{\tabcolsep}{2.8pt}
\renewcommand{\arraystretch}{1.12}
\begin{tabularx}{\textwidth}{@{}>{\raggedright\arraybackslash}p{0.12\textwidth} ZZZZZ SSSS D@{}}
\toprule
& \multicolumn{5}{c}{\cellcolor{zeroshotbg}\textbf{Zero-shot}}
& \multicolumn{4}{c}{\cellcolor{sftbg}\textbf{SFT}}
& \multicolumn{1}{c@{}}{\cellcolor{wsdmbg}\textbf{\method}} \\
\cmidrule(lr){2-6} \cmidrule(lr){7-10} \cmidrule(l){11-11}
\textbf{Metric}
& Huny. & CogX & Wan & Cosmos & LTX-2
& Huny. & CogX & Wan & Cosmos
& LTX-2 \\
\midrule
Object
& 0.0 & 0.0 & 2.0 & 32.0 & 20.0
& 26.0 & 38.0 & 58.0 & \underline{62.0}
& \textbf{70.0} \\

Behavior
& 2.1 & 0.0 & 2.1 & 31.9 & 29.8
& 10.6 & 28.0 & 55.3 & \textbf{61.7}
& \underline{57.4} \\

Environment
& 0.0 & 0.0 & 6.7 & 24.1 & 41.4
& 27.6 & 41.4 & \textbf{65.5} & \textbf{65.5}
& \underline{58.6} \\
\bottomrule
\end{tabularx}
\end{table*}

\begin{figure}[t]
    \centering
    \includegraphics[width=0.98\textwidth]{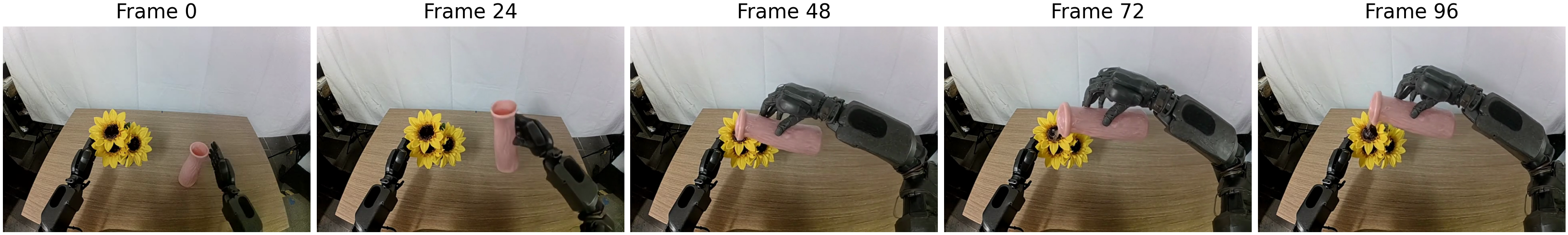}
    \caption{Example video generated with \method and LTX-2 on the DreamGen benchmark. Task: \textit{Use the right hand to pick up the pink bottle and pour water on the flower.}}
    \label{fig:gr00t_example}
\end{figure}

\subsection{Discussion \& Limitations}

\paragraph{Generalizability}
Training with \method leads to substantial improvements on \benchname as well as on robotics-related tasks (Sec.~\ref{sec:robotic_tasks}), achieving performance competitive with supervised fine-tuning. Furthermore, recent advances in distilling video generators into few-step models enable efficient RL-based optimization. We show that \method can effectively leverage these distilled models, resulting in significant gains in training efficiency.
\paragraph{Limits of Data-Free Training}
The results on the DreamGen benchmark highlight an inherent limitation of the data-free setting. In particular, the model cannot recover accurate robot-specific dynamics without access to corresponding data. While \method generates plausible task solutions, it lacks detailed knowledge of the appearance and motion characteristics of a specific robotic platform beyond the initial frame; see Fig.~\ref{fig:gr00t_example}. This limitation is intrinsic to the data-free nature of \method. Extending \method to video continuation and in-context learning (ICL) could resolve this issue.
\paragraph{Out-of-Distribution Tasks}
We further investigate performance on out-of-distribution tasks, such as puzzle-based games~\cite{Wang2026AVB}. As detailed in Appendix Sec.~\ref{sec:appendix_vbvr}, when the Demonstrator fails to produce coherent solutions, \method still yields improvements, albeit with diminished gains. This observation motivated the alternating RL training strategy (Sec.~\ref{alg:alt_grpo_demo_simple}); however, as shown in Fig.~\ref{fig:rl_ablation}, this approach introduces additional instability.

%% file: sections/appendix.tex
\section{Technical appendices and supplementary material}
\subsection{Further Implementation Details}
\label{sec:appendix_implementation_details}

In Tab.~\ref{tab:training_hparams_comparison}, we report the hyperparameters used for self-distilling LTX-2 and HunyuanVideo-1.5 in the experiments from Sec.~\ref{sec:experiments}. 
\begin{table*}[t]
\centering
\small
\begin{tabular}{lll}
\toprule
\textbf{Category} & \textbf{LTX-2} & \textbf{HunyuanVideo-1.5} \\
\midrule
Base model & \texttt{LTX-2-19b-distilled} & \texttt{HunyuanVideo-1.5-480p\_i2v} \\
Model type & \texttt{ltx2\_i2v} & \texttt{hy15\_i2v} \\
Fine-tuning method & LoRA & LoRA \\
LoRA rank & 64 & 64 \\
LoRA alpha & 128 & 128 \\
Trainer type & \texttt{AWM-demo} & \texttt{GRPO-demo} \\
Loss type & \texttt{exp\_first} & -- \\
Advantage weighting & \texttt{ghuber} & -- \\
Advantage aggregation & \texttt{gdpo} & -- \\
Learning rate & $2 \times 10^{-4}$ & $1 \times 10^{-4}$ \\
Optimizer & Adam & Adam \\
Adam betas & $(0.9,\,0.999)$ & $(0.9,\,0.999)$ \\
Adam epsilon & $10^{-8}$ & $10^{-8}$ \\
Weight decay & $10^{-4}$ & $10^{-4}$ \\
Max grad norm & 1.0 & 1.0 \\
EMA decay & 0.96 & 0.9 \\
EMA decay schedule & constant & power \\
EMA update interval & 1 & 4 \\
Number of inner epochs & 1 & 1 \\
Unique samples per epoch & 32 & 8 \\
Group size & 24 & 16 \\
Training resolution & $384 \times 576$ & $480 \times 848$ \\
Evaluation resolution & $512 \times 768$ & $480 \times 848$ \\
Number of frames & 121 & 121 \\
Inference steps (train) & 8 & 10 \\
Inference steps (eval) & 8 & 40 \\
Training timesteps & 4 & 10 \\
Timestep range & $[0,\,1]$ & $[0,\,0.9]$ \\
Clip range & $[-1, 1]$ & $[-3\!\times\!10^{-4},\,3\!\times\!10^{-4}]$ \\
Advantage clip range & $[-5, 5]$ & $[-5, 5]$ \\
$\beta_d$ & 0.008 & 1.0 \\

\midrule
\multicolumn{3}{c}{\textit{Reward}} \\
\midrule
Reward model & \texttt{Qwen3.5-27B-FP8} & \texttt{Qwen3-VL-8B-Instruct} \\
Reward type & multi (task/consistency/pick) & task+consistency \\
Reward weight & 0.5 / 0.225 / 0.225 & 0.95 \\
Distillation reward weight & 0.05 & 0.05 \\
\bottomrule

\end{tabular}
\caption{Main training hyperparameters used for fine-tuning LTX-2 and HunyuanVideo-1.5. For details on implementation-specific parameters, we refer to the official codebase.}
\label{tab:training_hparams_comparison}
\end{table*}

\subsection{Compute Resources}
The primary results presented in Sec.~\ref{sec:comparison_to_baselines} were obtained using a large-scale cluster comprising 128 GH200 GPUs.  
In contrast, the ablation studies in Sec.~\ref{sec:ablation_studies} were conducted on a smaller setup of 16 GH200 GPUs over a 12-hour period.

\subsubsection{Distribution-Matching Self-Distillation}
\label{sec:dmd_self_distillation}

We also investigated Distribution Matching Distillation (DMD) as an alternative on-policy distillation objective combined with RL, following~\cite{Jiang2025DistributionMD}.

As background, diffusion models generate high-quality samples by iteratively denoising Gaussian noise. However, this multi-step sampling process is computationally expensive, motivating distillation into a one-step generator \(G_\theta(z)\). Distribution Matching Distillation trains \(G_\theta\) to match the distribution of a pretrained teacher diffusion model, rather than exactly reproducing its full denoising trajectory. The core objective is to align the generator-induced distribution with the teacher distribution by minimizing the KL divergence:
\begin{align}
D_{\mathrm{KL}}(p_{\mathrm{fake}} \,\|\, p_{\mathrm{real}})
&=
\mathbb{E}_{x \sim p_{\mathrm{fake}}}
\left[
\log \frac{p_{\mathrm{fake}}(x)}{p_{\mathrm{real}}(x)}
\right].
\end{align}
The corresponding score-based gradient is given by
\begin{align}
\nabla_\theta D_{\mathrm{KL}}
&=
\mathbb{E}_{z}
\left[
-
\left(
s_{\mathrm{real}}(x)-s_{\mathrm{fake}}(x)
\right)
\frac{\partial G_\theta(z)}{\partial \theta}
\right],
\qquad x = G_\theta(z).
\end{align}

Because diffusion models estimate scores on noisy samples, DMD perturbs generated samples according to
\begin{align}
q_t(x_t \mid x)
&=
\mathcal{N}(\alpha_t x,\sigma_t^2 I),
\end{align}
and estimates the real and fake scores using diffusion denoisers:
\begin{align}
s_{\mathrm{real}}(x_t,t)
&=
-\frac{x_t-\alpha_t \mu_{\mathrm{base}}(x_t,t)}{\sigma_t^2}, \\
s_{\mathrm{fake}}(x_t,t)
&=
-\frac{x_t-\alpha_t \mu^\phi_{\mathrm{fake}}(x_t,t)}{\sigma_t^2}.
\end{align}
The fake-score denoiser is trained online with the denoising objective
\begin{align}
\mathcal{L}^{\phi}_{\mathrm{denoise}}
&=
\left\|
\mu^\phi_{\mathrm{fake}}(x_t,t)-x
\right\|_2^2,
\end{align}
while the generator is updated using the approximate distribution-matching gradient
\begin{align}
\nabla_\theta D_{\mathrm{KL}}
&\simeq
\mathbb{E}_{z,t,x,x_t}
\left[
w_t \alpha_t
\left(
s_{\mathrm{fake}}(x_t,t)
-
s_{\mathrm{real}}(x_t,t)
\right)
\frac{\partial G_\theta(z)}{\partial \theta}
\right].
\end{align}

We adapt this objective to our demonstrator--executor self-distillation setting by minimizing the KL divergence between the executor distribution \(p_{\theta}(x_t,t \mid c_{\mathrm{E}})\) and the demonstrator distribution \(p_{\theta'}(x_t,t \mid c_{\mathrm{D}})\):
\begin{align}
D_{\mathrm{KL}}
\left(
p_{\theta}(x_t,t \mid c_{\mathrm{E}})
\,\|\, 
p_{\theta'}(x_t,t \mid c_{\mathrm{D}})
\right).
\end{align}
Taking the gradient with respect to the executor parameters \(\theta\) yields the approximation
\begin{align}
&\nabla_\theta
D_{\mathrm{KL}}\!\left(
p_{\theta}(x_t,t \mid c_{\mathrm{E}})
\,\|\, 
p_{\theta'}(x_t,t \mid c_{\mathrm{D}})
\right)
\simeq \notag\\
&\quad
\mathbb{E}_{z,t,x,x_t}
\Biggl[
w_t \alpha_t
\left(
s_{\theta}(x_t,t \mid c_{\mathrm{E}})
-
s_{\theta'}(x_t,t \mid c_{\mathrm{D}})
\right) \notag\\
&\hspace{6.5em}
\times
\frac{\partial G_\theta(z,c_{\mathrm{E}})}{\partial \theta}
\Biggr],
\\
&\quad
x = G_\theta(z,c_{\mathrm{E}}),
\qquad
x_t \sim q_t(x_t \mid x).
\end{align}

Although this objective is conceptually appealing, we found it unstable in practice. Across our experiments, the DMD-based self-distillation objective consistently diverged, and we therefore did not use it in the final method.

\subsubsection{Reward Prompts}
\label{sec:appendix_reward_prompts}
We present the reward prompts used during training, shown in Fig.~\ref{fig:task_prompt} and Fig.~\ref{fig:consistency_prompt}. These prompts provide binary supervision for task success and visual-temporal consistency, enabling stable reward computation from generated videos.

\begin{figure}[t]
\centering
\begin{tcolorbox}[title=Task Success Evaluation Prompt]

\textbf{Instruction:} \textit{\{instruction\}} \\
\textbf{Target agent:} \textit{\{agent\_name\}}

\vspace{0.5em}

You are given generated video frames in correct temporal order (frames 0..N-1).

You are judging whether a \textbf{target agent} in this temporally ordered video successfully completes an instruction.

If the prompt refers to first-person view, camera perspective, or first-person perspective, interpret that as referring to the camera movement and viewpoint.

Analyze the video carefully and reason strictly from visible evidence.  
Do not assume intent. Do not infer unseen events. Do not guess.

\vspace{0.5em}
\textbf{Evaluation Criteria}

\begin{enumerate}[leftmargin=1.5em]

\item \textbf{Correct Agent Attribution}
\begin{itemize}
    \item The required action must be performed by the target agent.
    \item If another agent performs the action, the task is \textbf{NOT} successful.
\end{itemize}

\item \textbf{Action Progress and Completion}
\begin{itemize}
    \item The target agent must clearly complete the instructed action.
    \item The instruction must be fully satisfied by the final frame (or earlier with a stable and persistent completed state).
    \item For first-person camera view: no other agent should be performing the action.
\end{itemize}

\item \textbf{Realism and Physical Consistency}
\begin{itemize}
    \item The outcome must be grounded in objects present in earlier frames.
    \item If video quality is too poor to determine completion reliably, the task is \textbf{NOT} successful.
\end{itemize}

\end{enumerate}

\vspace{0.5em}
\textbf{Decision Rule}

Answer \textbf{Yes} only if all of the following are true:
\begin{itemize}
    \item The target agent performs the required action
    \item The action is fully completed by the final frame
    \item The action is physically realistic and consistent
\end{itemize}

If any condition fails, answer \textbf{No}.

\vspace{0.5em}
\textbf{Output Format}

Return exactly one word: \textbf{Yes} or \textbf{No}.  
Do not include any explanation or additional text.

\end{tcolorbox}
\caption{Prompt used for task reward during training.}
\label{fig:task_prompt}
\end{figure}

\begin{figure}[t]
\centering
\begin{tcolorbox}[title=Visual Quality and Temporal Consistency Prompt]

You are given generated video frames in correct temporal order (frames 0..N-1).

You are judging whether this temporally ordered video is successful in terms of visual quality and temporal consistency.

Analyze the video carefully and reason strictly from visible evidence.  
Do not infer hidden causes. Do not guess missing frames. Do not speculate beyond what is visible.

\vspace{0.5em}
\textbf{Evaluation Criteria}

\begin{enumerate}[leftmargin=1.5em]

\item \textbf{Visual Quality}
\begin{itemize}
    \item Frames should be clear, coherent, and stable.
    \item Severe blur, flicker, distortions, broken rendering, or major artifacts mean the video is \textbf{NOT} successful.
\end{itemize}

\item \textbf{Temporal Consistency}
\begin{itemize}
    \item Motion and state changes should be smooth and physically coherent over time.
    \item No teleportation, popping, identity instability, discontinuous motion, or implausible changes.
\end{itemize}

\item \textbf{Reliability of Evidence}
\begin{itemize}
    \item Judge only from visible evidence in the frames.
    \item If frames are too unclear to assess reliably, the video is \textbf{NOT} successful.
\end{itemize}

\item \textbf{Consistency with Initial Frame}
\begin{itemize}
    \item The video must remain consistent in style and quality with the first frame.
\end{itemize}

\end{enumerate}

\vspace{0.5em}
\textbf{Decision Rule}

Answer \textbf{Yes} only if all of the following are true:
\begin{itemize}
    \item Visual quality is acceptable overall without severe artifacts
    \item Temporal consistency is acceptable without severe continuity failures
    \item Frames are clear enough for reliable judgment
\end{itemize}

If any condition fails, answer \textbf{No}.

\vspace{0.5em}
\textbf{Output Format}

Return exactly one word: \textbf{Yes} or \textbf{No}.  
Do not include any explanation or additional text.

\end{tcolorbox}
\caption{Prompt used for the consistency reward during training.}
\label{fig:consistency_prompt}
\end{figure}

\subsection{Further Experiments}

We compared a per-step distillation reward with the trajectory-level distillation reward in Eq.~\ref{eq:distillation_reward} and found only minor differences in final performance. We therefore use the trajectory-level form in the main experiments for simplicity. We also investigated sharing weights between the Executor and Demonstrator. Across hyperparameter settings and EMA schedules, using the Executor weights as the Demonstrator led to unstable training, so all main results use a fixed Demonstrator.

\subsection{Evaluation Denominators}
\label{sec:evaluation_denominators}

VLM-based evaluation can occasionally fail because of malformed outputs or API errors. Main-text scores are computed over valid VLM judgments only. Tab.~\ref{tab:worldtasks_eval_denominators} reports the success counts, valid denominators, and failure rates for the \benchname comparison in Tab.~\ref{tab:world_tasks_bench_results}. This makes the amount of discarded data explicit; the largest observed failure rate is $3.0\%$.

\begin{table*}[t]
\caption{\textbf{Valid VLM-evaluation denominators on \benchname.} Each cell reports score (successes/valid judgments; failure rate), where the failure rate is computed against the 200-sample benchmark.}
\label{tab:worldtasks_eval_denominators}
\centering
\scriptsize
\setlength{\tabcolsep}{3pt}
\renewcommand{\arraystretch}{1.12}
\begin{tabularx}{\textwidth}{@{}>{\raggedright\arraybackslash}p{0.25\textwidth}YYYY@{}}
\toprule
\textbf{Method} &
\textbf{Task} $\uparrow$ &
\textbf{Agent} $\uparrow$ &
\textbf{Consist.} $\uparrow$ &
\textbf{Avg.} $\uparrow$ \\
\midrule
HY1.5
& 0.464 (91/196; 2.0\%)
& 0.540 (108/200; 0.0\%)
& 0.780 (156/200; 0.0\%)
& 0.597 (117/196; 2.0\%) \\
\rowcolor{gray!10}
HY1.5+\method$^{*}$
& 0.574 (113/197; 1.5\%)
& 0.630 (126/200; 0.0\%)
& 0.828 (164/198; 1.0\%)
& 0.673 (134/199; 0.5\%) \\
\midrule
LTX-2
& 0.315 (63/200; 0.0\%)
& 0.395 (79/200; 0.0\%)
& 0.690 (138/200; 0.0\%)
& 0.467 (93/199; 0.5\%) \\
LTX-2+SFT
& 0.292 (57/195; 2.5\%)
& 0.389 (77/198; 1.0\%)
& 0.682 (135/198; 1.0\%)
& 0.454 (89/196; 2.0\%) \\
\rowcolor{gray!10}
LTX-2+\method$^{*}$
& 0.452 (90/199; 0.5\%)
& 0.500 (100/200; 0.0\%)
& 0.693 (138/199; 0.5\%)
& 0.548 (109/199; 0.5\%) \\
\midrule
LTX-2 (8-Step)
& 0.285 (57/200; 0.0\%)
& 0.391 (77/197; 1.5\%)
& 0.694 (136/196; 2.0\%)
& 0.455 (91/200; 0.0\%) \\
LTX-2 (8-Step)+VLM
& 0.495 (99/200; 0.0\%)
& 0.572 (111/194; 3.0\%)
& 0.732 (145/198; 1.0\%)
& 0.598 (119/199; 0.5\%) \\
\rowcolor{gray!10}
LTX-2 (8-Step)+\method
& 0.605 (121/200; 0.0\%)
& 0.691 (134/194; 3.0\%)
& 0.882 (172/195; 2.5\%)
& 0.726 (143/197; 1.5\%) \\
\bottomrule
\end{tabularx}
\end{table*}

\subsection{VBVR Evaluation}
\label{sec:appendix_vbvr}

The VBVR tasks are substantially out of distribution for our setting because they are longer and more abstract than the short task instructions in \datasetname. We therefore evaluate the vanilla model, the \method-trained model, and a prompt-rewrite variant that converts the long benchmark query into a shorter task instruction before generation.
%
%
%
%
\begin{table}[t]
\centering
\small
\setlength{\tabcolsep}{6pt}
\begin{tabular}{lccccccc}
\toprule
\textbf{Model} & \textbf{Mean} $\uparrow$ & \textbf{Abstr.} $\uparrow$ & \textbf{Categ.} $\uparrow$ & \textbf{Navig.} $\uparrow$ & \textbf{Perc.} $\uparrow$ & \textbf{Physics} $\uparrow$ & \textbf{Transform.} $\uparrow$ \\
\midrule

LTX-2 
& 0.599  
& 0.619 & \textbf{0.623} & 0.596 & 0.611 & 0.574 & 0.573 \\

LTX-2+\method 
& \underline{0.613}  
& \underline{0.640} & 0.620 & \textbf{0.603} & \underline{0.630} & \textbf{0.580} & \underline{0.619} \\

LTX-2+\method$^{*}$ 
& \textbf{0.622}  
& \textbf{0.658} & \underline{0.621} & \textbf{0.603} & \textbf{0.662} & \underline{0.579} & \textbf{0.621} \\

\bottomrule
\end{tabular}
\caption{VBVR evaluation results across models and categories (250 samples each). $^{*}$ indicates the prompt-rewrite variant.}
\label{tab:vbvr_results}
\end{table}

\subsection{Further Details on \datasetname}
\label{sec:appendix_further_details_dataset}
\subsubsection{Dataset Filtering}
\label{sec:appendix_dataset_filtering}
We construct the dataset from pre-extracted images. We first remove incomplete or already processed entries, and then apply an image-quality filter. This filter rejects frames that are excessively blurry, underexposed, overexposed, or nearly empty. Concretely, we compute the variance of the Laplacian as a blur indicator, the mean luminance to detect overly dark or bright images, and the fraction of near-black and near-white pixels to remove degenerate frames. In our main setup, we use thresholds of $\texttt{min\_laplacian\_var}=12.0$, $\texttt{min\_mean\_luma}=20.0$, $\texttt{max\_mean\_luma}=235.0$, $\texttt{max\_black\_ratio}=0.85$, and $\texttt{max\_white\_ratio}=0.85$.

To further improve visual quality, we rank the surviving frames with an aesthetic score based on a CLIP-based scoring function and keep only the top $90\%$ of samples according to the combined quality score. We also apply a vision-language quality screening step that discards frames deemed unsuitable for agent-based video generation. 

The VLM is prompted to assess whether an image is appropriate for agent-based video generation based on the prompt given in Box~\ref{fig:vlm_filter_prompt}.

\begin{figure}[t]
\centering
\begin{tcolorbox}[title=VLM-Based Dataset Quality Filtering Prompt]

Look at this image carefully. Is this image suitable as a training sample for an agent-based video generation model?

\vspace{0.5em}
\textbf{Consider whether:}
\begin{itemize}
    \item The scene contains meaningful, diverse visual content (not blank, corrupted, or trivially simple)
    \item The image quality is acceptable (not severely blurry, overexposed, underexposed, or distorted)
    \item The image depicts a scene where an agent could plausibly perform tasks
\end{itemize}

\vspace{0.5em}
\textbf{Output Format}

Answer with ONLY \textbf{Yes} or \textbf{No}.  
Do not include any explanation or additional text.

\end{tcolorbox}
\caption{Prompt used for VLM-based semantic filtering of dataset images.}
\label{fig:vlm_filter_prompt}
\end{figure}

Samples for which the VLM responds negatively are discarded. This additional semantic filtering step complements the low-level image quality criteria by removing visually valid but uninformative or non-actionable scenes, resulting in a dataset that is both visually and semantically suitable for downstream task-conditioned video generation.

\subsubsection{Example Task and Solution Prompts from \datasetname}
\label{sec:appendix_dataset_examples}

To qualitatively illustrate the structure of \datasetname, we present four representative samples below. Each example contains the first frame together with the first two task prompts and their corresponding descriptive solution prompts. Examples are shown in Fig.~\ref{fig:dataset_examples_1} and Fig.~\ref{fig:dataset_examples_2}.

\vspace{0.5em}

\noindent
\begin{figure}[t]
\centering
\begin{minipage}[t]{0.49\linewidth}
\vspace{0pt}
\begin{tcolorbox}[
  colback=white,
  colframe=black!15,
  boxrule=0.5pt,
  arc=2pt,
  left=6pt,right=6pt,top=6pt,bottom=6pt
]
\centering
\includegraphics[width=\linewidth]{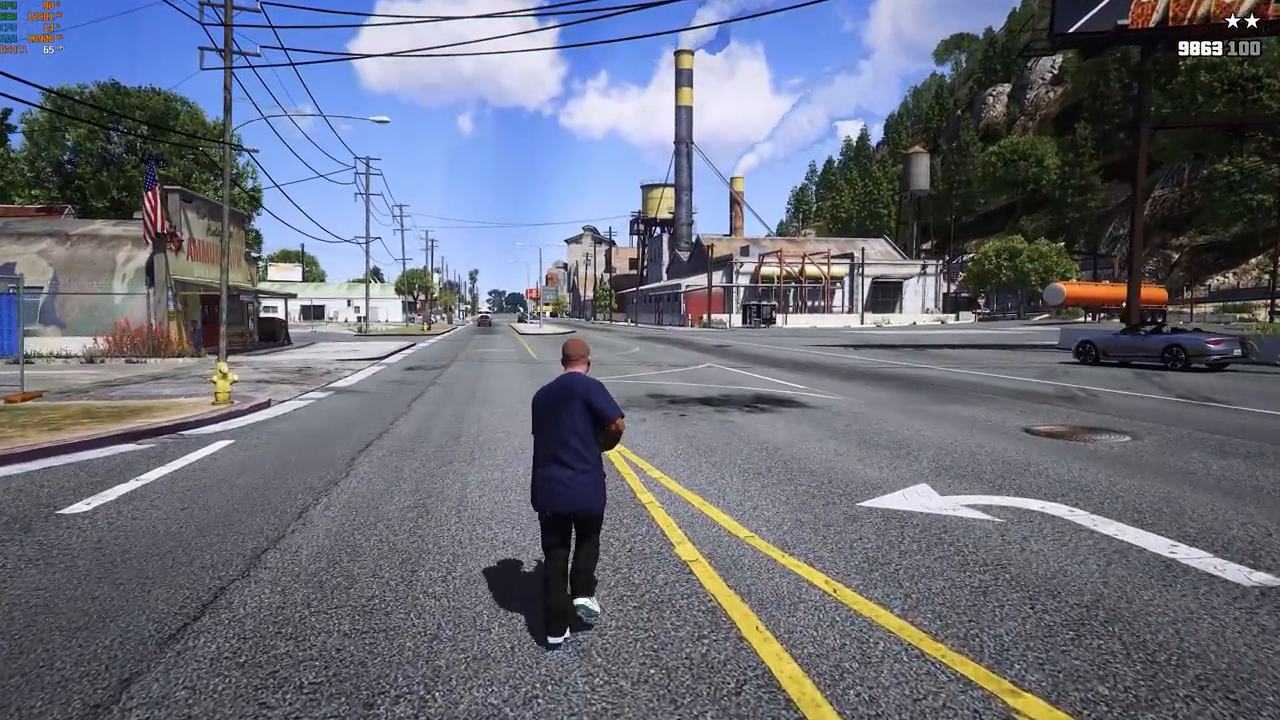}

\vspace{0.8em}

\textbf{Sample ID:} \texttt{100\_4}

\vspace{0.6em}

\begin{enumerate}[leftmargin=1.4em,itemsep=0.8em,topsep=0.2em]
    \item \textbf{Task 1:} [Man in blue shirt]: Step onto the yellow lane marking and stop exactly at the white arrow's tip.

    \textbf{Description 1:} The man in the blue shirt begins walking forward along the center of the road, his feet deliberately stepping onto the double yellow lane marking, and continues moving straight ahead until he reaches the tip of the white directional arrow painted on the asphalt, where he halts and stands still.

    \item \textbf{Task 2:} [Person in blue shirt]: Move forward to the nearest building. 

    \textbf{Description 2:} The person in the blue shirt begins walking forward along the center of the road, maintaining a steady pace toward the building on the left side of the street, their body oriented directly ahead as they cross the yellow double lines; after a few steps, they continue moving forward until they reach the sidewalk adjacent to the building, then they halt beside the American flag mounted on the building's facade, coming to a complete stop with their feet planted on the pavement.
\end{enumerate}
\end{tcolorbox}
\end{minipage}
\hfill
\begin{minipage}[t]{0.49\linewidth}
\vspace{0pt}
\begin{tcolorbox}[
  colback=white,
  colframe=black!15,
  boxrule=0.5pt,
  arc=2pt,
  left=6pt,right=6pt,top=6pt,bottom=6pt
]
\centering
\includegraphics[width=\linewidth]{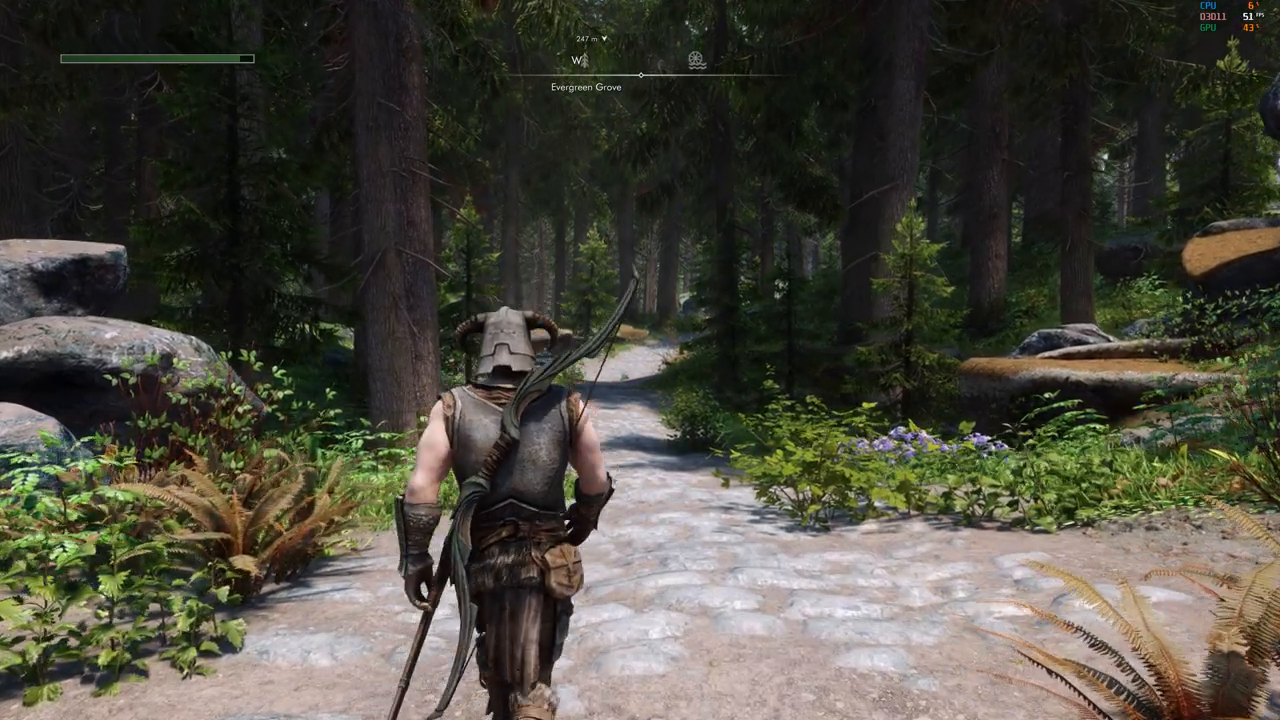}

\vspace{0.8em}

\textbf{Sample ID:} \texttt{145\_4}

\vspace{0.6em}

\begin{enumerate}[leftmargin=1.4em,itemsep=0.8em,topsep=0.2em]
    \item \textbf{Task 1:} [Character with horned helmet]: Use the bow to aim at the tree trunk directly ahead.

    \textbf{Description 1:} The character with the horned helmet slowly turns their upper body toward the tree trunk directly ahead, simultaneously drawing the bowstring back with their right hand while keeping their left hand steady on the bow's grip, their gaze fixed on the target as the bowstring tenses and the arrow nocks align with the trunk.
    
    \item \textbf{Task 2:} [Character with horned helmet]: Move to the largest boulder and stop beside its left edge.

    \textbf{Description 2:} The character with the horned helmet begins walking forward along the stone path, their body oriented toward the largest boulder visible to the left, and after a few steps, they decelerate, shifting their weight slightly as they turn their head to the left to align their gaze with the boulder's edge, then halt precisely beside its left side, their right hand resting on their hip while their left hand remains near the hilt of their weapon.
\end{enumerate}
\end{tcolorbox}
\end{minipage}
\caption{Two representative samples from \datasetname. Each sample includes an initial frame, task prompts, and corresponding descriptive solutions.}
\label{fig:dataset_examples_1}

\end{figure}

\begin{figure}[t]
\centering
\begin{minipage}[t]{0.49\linewidth}
\vspace{0pt}
\begin{tcolorbox}[
  colback=white,
  colframe=black!15,
  boxrule=0.5pt,
  arc=2pt,
  left=6pt,right=6pt,top=6pt,bottom=6pt
]
\centering
\includegraphics[width=\linewidth]{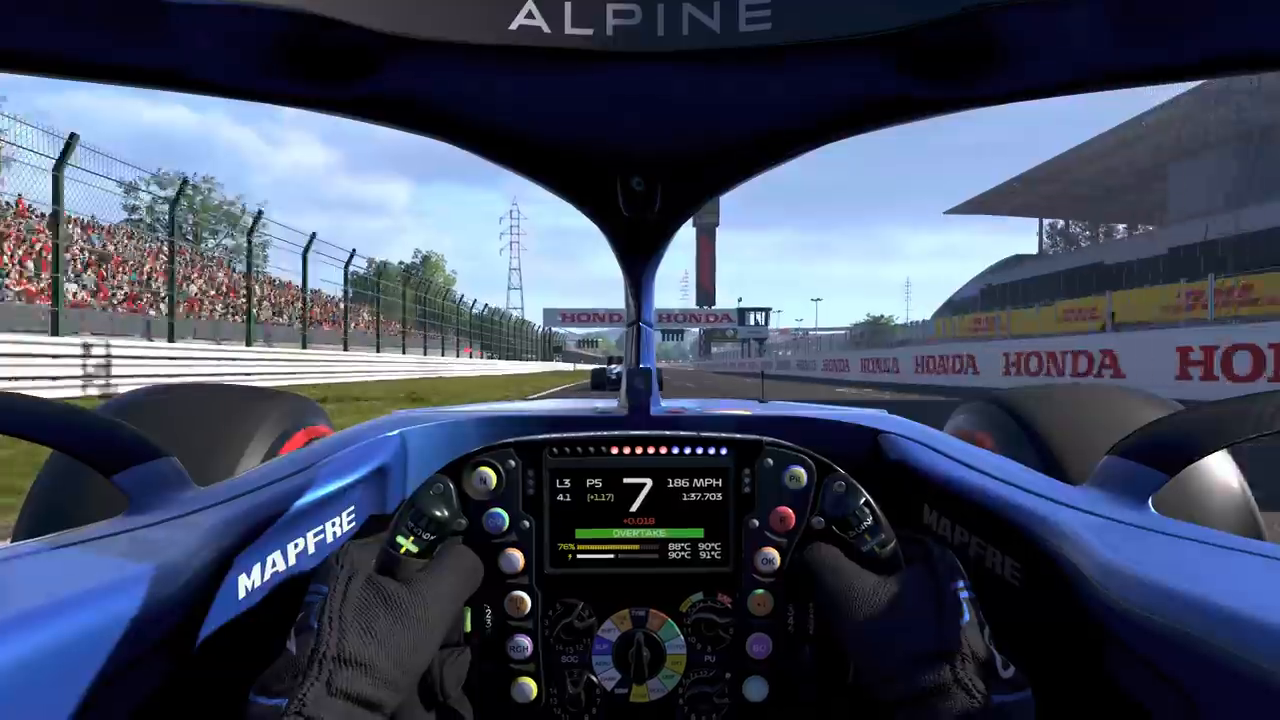}

\vspace{0.8em}

\textbf{Sample ID:} \texttt{6888\_1}

\vspace{0.6em}

\begin{enumerate}[leftmargin=1.4em,itemsep=0.8em,topsep=0.2em]
    \item \textbf{Task 1:} [Driver in racing suit]: Press the red button on the steering wheel's right side.

    \textbf{Description 1:} The driver's right hand, clad in a black racing glove, moves slightly forward and inward, pressing the red button located on the right side of the steering wheel, while the left hand remains steady on the left side of the wheel, and the vehicle continues forward along the track with the dashboard displaying 186 MPH and an overtaking indicator active.

    \item \textbf{Task 2:} [First-person view]: Align the car's front bumper with the white track curb ahead.

    \textbf{Description 2:} The driver's hands grip the steering wheel firmly, thumbs pressing the paddle shifters while the left hand subtly adjusts its position to maintain control; simultaneously, the right hand makes a slight inward rotation of the wheel to initiate a gentle steering correction toward the white track curb ahead, and the car's front bumper begins to approach the curb as the vehicle decelerates slightly, aligning its front edge with the curb's edge while the dashboard display updates to reflect the new position and speed.
\end{enumerate}
\end{tcolorbox}
\end{minipage}
\hfill
\begin{minipage}[t]{0.49\linewidth}
\vspace{0pt}
\begin{tcolorbox}[
  colback=white,
  colframe=black!15,
  boxrule=0.5pt,
  arc=2pt,
  left=6pt,right=6pt,top=6pt,bottom=6pt
]
\centering
\includegraphics[width=\linewidth]{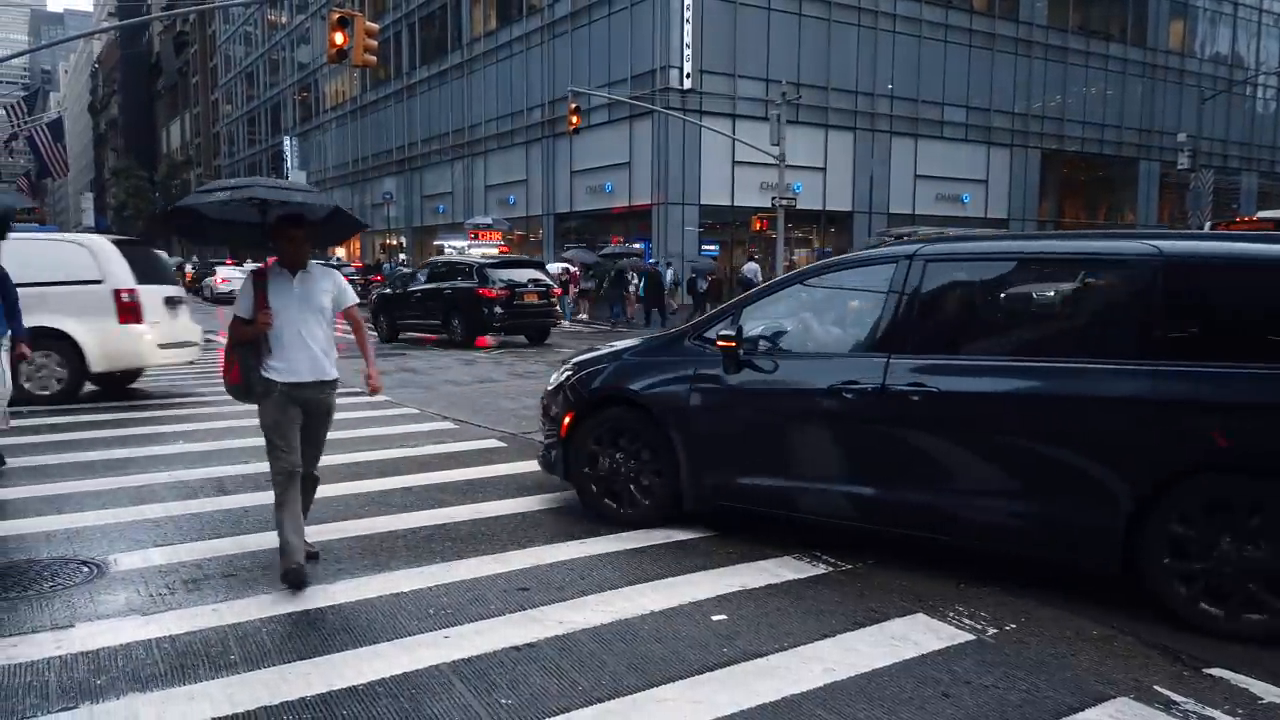}

\vspace{0.8em}

\textbf{Sample ID:} \texttt{7637\_6}

\vspace{0.6em}

\begin{enumerate}[leftmargin=1.4em,itemsep=0.8em,topsep=0.2em]
    \item \textbf{Task 1:} [Man holding black umbrella]: Step off the crosswalk and hand the umbrella to the sidewalk curb.

    \textbf{Description 1:} The man holding the black umbrella continues walking forward, stepping off the crosswalk onto the sidewalk, then lowers his arm and extends his hand toward the curb, releasing the umbrella to rest against the sidewalk edge.

    \item \textbf{Task 2:} [Black minivan]: Align its front bumper with the white pedestrian lane marking.

    \textbf{Description 2:} The black minivan advances forward while maintaining its current trajectory, its front bumper gradually moving closer to the white pedestrian lane marking on the asphalt, adjusting its position as it proceeds along the crosswalk.
\end{enumerate}
\end{tcolorbox}
\end{minipage}
\caption{Two representative samples from \datasetname. Each sample includes an initial frame, task prompts, and corresponding descriptive solutions.}
\label{fig:dataset_examples_2}

\end{figure}

\subsection{Evaluation Prompts for \benchname}
\label{sec:evaluation_prompts}
In this section, we present the evaluation prompts used in \benchname to assess generated videos along three complementary dimensions. The first prompt evaluates whether the instructed task is successfully completed, focusing strictly on end-state correctness. The second prompt verifies correct agent attribution, ensuring that the intended actor performs the specified action. The third prompt measures physical realism and temporal consistency, capturing whether the video exhibits plausible motion and coherent dynamics. Together, these prompts provide a structured and binary evaluation framework that isolates task success, agent correctness, and physical validity.

\begin{figure}[t]
\centering
\begin{tcolorbox}[title=Task Solved Evaluation Prompt]

\textbf{Instruction:} \textit{\{eval\_prompt\}} \\

\vspace{0.5em}

You are a strict video benchmark judge.

Decide whether the video successfully solves the instructed task by the end.

\vspace{0.5em}
\textbf{Decision Rule}

\begin{itemize}
    \item Return \textbf{1} only if the requested action is clearly completed.
    \item Return \textbf{0} if the task is incomplete, incorrect, ambiguous, only partially done, or a different action occurs.
\end{itemize}

\vspace{0.5em}
\textbf{Output Format}

Reply with only \textbf{0} or \textbf{1}.

\end{tcolorbox}
\caption{Prompt used to evaluate whether a generated video successfully completes the instructed task.}
\label{fig:task_solved_prompt}
\end{figure}

\begin{figure}[t]
\centering
\begin{tcolorbox}[title=Correct Agent Attribution Prompt]

\textbf{Full instruction:} \textit{\{eval\_prompt\}} \\
\textbf{Expected acting agent:} \textit{\{agent\_name\}} \\
\textbf{Expected task for that agent:} \textit{\{agent\_task\_text\}} \\

\vspace{0.5em}

You are a strict video benchmark judge.

Decide whether the correct agent is the one performing the instructed action.

\vspace{0.5em}
\textbf{Decision Rule}

\begin{itemize}
    \item Return \textbf{1} only if the expected agent clearly performs the task.
    \item Return \textbf{0} if another agent performs it, the acting agent is unclear, or the expected agent is absent or incorrect.
\end{itemize}

\vspace{0.5em}
\textbf{Output Format}

Reply with only \textbf{0} or \textbf{1}.

\end{tcolorbox}
\caption{Prompt used to verify that the correct agent performs the instructed action.}
\label{fig:correct_agent_prompt}
\end{figure}

\begin{figure}[t]
\centering
\begin{tcolorbox}[title=Physical Realism and Temporal Consistency Prompt]

\vspace{0.5em}

You are a strict video benchmark judge.

Decide whether the video shows physically realistic execution.

\vspace{0.5em}
\textbf{Decision Rule}

\begin{itemize}
    \item Return \textbf{1} only if motion, contact, timing, object interactions, and scene dynamics are physically plausible and temporally coherent.
    \item Return \textbf{0} if there is teleportation, impossible motion, severe temporal inconsistency, broken dynamics, identity drift, or obvious non-physical behavior.
\end{itemize}

\vspace{0.5em}
\textbf{Output Format}

Reply with only \textbf{0} or \textbf{1}.

\end{tcolorbox}
\caption{Prompt used to evaluate physical realism and temporal consistency of generated videos.}
\label{fig:physical_realism_prompt}
\end{figure}

\subsection{Alternating Training Algorithm}
We present the alternating training procedure  (Alg.~\ref{alg:1}).
\begin{algorithm}[ht!]
\caption{GRPO/AWM with Demonstrator Anchoring}
\label{alg:alt_grpo_demo_simple}
\begin{algorithmic}[1]
\Require pretrained base model $f_0$ with velocity field $v_0$
\Require Executor sampler $p_{\theta}(\cdot \mid c_{\mathrm{E}})$ and velocity field $v_{\theta}$
\Require Demonstrator sampler $p_{\phi}(\cdot \mid c_{\mathrm{D}})$ and velocity field $v_{\phi}$; fixed for the main \method setting
\Require VLM-generated condition pairs $\mathcal{S} = \{(c_{\mathrm{E}}, c_{\mathrm{D}})\}$
\Require reward weights $\lambda_{\mathrm{task}},\lambda_{\mathrm{distill}}$
\Require Executor anchor coefficient $\beta_d$ and optional Demonstrator anchor coefficient $\beta_{\phi}$
\Require optional alternation period $N$; set $N=0$ to keep the Demonstrator fixed
\For{iteration $e = 1,2,\dots$}
    \State sample $(c_{\mathrm{E}}, c_{\mathrm{D}}) \sim \mathcal{S}$

    \If{$N > 0$ \textbf{and} $e \bmod N = 0$}
        \State \textbf{Optional Demonstrator round}
        \State sample rollout group $\{\tau_i\}_{i=1}^{G} \sim p_{\phi}(\cdot \mid c_{\mathrm{D}})$
        \State compute task rewards $r_{\mathrm{task}}(\tau_i; \mathcal{I},\mathcal{T})$
        \State compute group-relative RL loss $\mathcal{L}_{\mathrm{RL}}(\phi)$ from the task rewards
        \State compute base-model anchor loss $\mathcal{L}_{\mathrm{base}}$ against $f_0$
        \State update $\phi$ by minimizing $\mathcal{L}_{\mathrm{RL}}(\phi)+\beta_{\phi}\mathcal{L}_{\mathrm{base}}$
    \Else
        \State \textbf{Executor round}
        \State sample rollout group $\{\tau_i\}_{i=1}^{G} \sim p_{\theta}(\cdot \mid c_{\mathrm{E}})$
        \State compute task rewards $r_{\mathrm{task}}(\tau_i; \mathcal{I},\mathcal{T})$
        \State compute distillation rewards $r_{\mathrm{distill}}(\tau_i)$ using Eq.~\ref{eq:distillation_reward}
        \State set $R_i=\lambda_{\mathrm{task}}r_{\mathrm{task}}(\tau_i)+\lambda_{\mathrm{distill}}r_{\mathrm{distill}}(\tau_i)$
        \State compute group-relative RL loss $\mathcal{L}_{\mathrm{RL}}(\theta)$ from $\{R_i\}_{i=1}^{G}$
        \State compute Demonstrator anchor loss $\mathcal{L}_{\mathrm{anchor}}$ using Eq.~\ref{eq:anchor_loss}
        \State update $\theta$ by minimizing $\mathcal{L}_{\mathrm{RL}}(\theta)+\beta_d\mathcal{L}_{\mathrm{anchor}}$
    \EndIf
\EndFor

\State \textbf{Evaluation:} use $p_{\theta}(\cdot \mid c_{\mathrm{E}})$
\end{algorithmic}
\label{alg:1}
\end{algorithm}

\subsection{Theoretical Background}
\label{sec:theoretical_background}

\paragraph{Group-relative policy optimization.}
We optimize the student model using a group-relative reinforcement learning objective inspired by GRPO~\cite{Shao2024DeepSeekMathPT}. For each task instruction $\mathcal{T}$, we sample $G$ trajectories
\[
\tau_1,\ldots,\tau_G \sim p_{\theta}(\cdot \mid \mathcal{T}),
\]
and compute their rewards $r(\tau_i)$. These rewards are normalized within the group to produce relative advantages
\[
A_i
=
\frac{r(\tau_i)-\mu_r}{\sigma_r+\varepsilon},
\quad
\mu_r=\frac{1}{G}\sum_{j=1}^{G} r(\tau_j),
\quad
\sigma_r^2=\frac{1}{G}\sum_{j=1}^{G}(r(\tau_j)-\mu_r)^2.
\]
The resulting objective is
\[
\mathcal{L}_{\mathrm{GRPO}}
=
-
\mathbb{E}_{\mathcal{T}}
\left[
\frac{1}{G}\sum_{i=1}^{G}
A_i \log p_{\theta}(\tau_i \mid \mathcal{T})
\right].
\]
This formulation reinforces trajectories that outperform their peers on the same task while suppressing weaker ones. Unlike standard distillation, it enables improvements beyond the teacher whenever the reward function favors better solutions.

\paragraph{FlowGRPO.}
Flow matching models learn a continuous transport from noise $x_0 \sim \mathcal{N}(0,I)$ to data $x_1 \sim p_{\mathrm{data}}$ via
\[
x_t = (1-t)x_0 + tx_1,
\]
and are trained with the objective
\[
\mathcal{L}_{\mathrm{FM}}(\theta)
=
\mathbb{E}_{x_0,x_1,t}
\left[
\| v_\theta(x_t,t,c) - (x_1 - x_0) \|^2
\right].
\]
This enables deterministic sampling through the ODE
\[
dx_t = v_\theta(x_t,t)\,dt.
\]

Flow-GRPO~\cite{Liu2025FlowGRPO} extends this framework by casting denoising as a multi-step MDP. Here the subscript $t-1$ denotes the next state in the discrete reverse sampler, not the continuous flow-time convention above. The state, action, and policy are defined as
\[
s_t = (c,t,x_t),
\quad
a_t = x_{t-1},
\quad
\pi_\theta(a_t \mid s_t) = p_\theta(x_{t-1}\mid x_t,c).
\]
To introduce exploration, the deterministic flow is converted into an SDE:
\[
dx_t
=
\left(
v_t(x_t)
-
\frac{\sigma_t^2}{2}\nabla \log p_t(x_t)
\right)dt
+
\sigma_t\,dw.
\]

The model is then optimized using a clipped GRPO-style objective:
\begin{align*}
J_{\mathrm{Flow\text{-}GRPO}}(\theta)
&=
\mathbb{E}
\Biggl[
\frac{1}{G}
\sum_{i=1}^{G}
\frac{1}{T}
\sum_{t=0}^{T-1}
\Bigl(
\min\!\left(
r_t^i(\theta)\hat{A}_t^i,
\right. \\
&\qquad\qquad\left.
\operatorname{clip}(r_t^i(\theta),1-\epsilon,1+\epsilon)\hat{A}_t^i
\right)
-
\beta
D_{\mathrm{KL}}(\pi_\theta \| \pi_{\mathrm{ref}})
\Bigr)
\Biggr],
\end{align*}
where
\[
r_t^i(\theta)
=
\frac{p_\theta(x_{t-1}^i \mid x_t^i,c)}
{p_{\theta_{\mathrm{old}}}(x_{t-1}^i \mid x_t^i,c)},
\]
and
\[
\hat{A}_t^i
=
\frac{
R(x_1^i,c)
-
\operatorname{mean}(\{R(x_1^j,c)\}_{j=1}^{G})
}{
\operatorname{std}(\{R(x_1^j,c)\}_{j=1}^{G})
}.
\]

\paragraph{Advantage Weighted Matching.}
Advantage Weighted Matching (AWM)~\cite{Xue2025AdvantageWM} addresses a mismatch between diffusion-style reinforcement learning objectives and the original flow-matching training objective. Methods such as DDPO effectively optimize noisy reverse-step likelihoods, which increases variance and slows convergence.

AWM instead preserves the original flow-matching objective and incorporates rewards through advantage weighting. The prompt $c$ defines the state, and the final sample $x_1$ is treated as the action with policy
\[
\pi_\theta(x_1 \mid c).
\]
The sequence likelihood is approximated by the negative flow-matching loss:
\[
\log \hat{\pi}_\theta(x_1 \mid c)
\approx
-
\mathbb{E}_{t}
\left[
w(t)\| v_\theta(x_t,t,c) - (x_1 - x_0) \|^2
\right].
\]
This yields the likelihood ratio
\begin{align*}
\frac{\hat{\pi}_\theta(x_1 \mid c)}
{\hat{\pi}_{\theta_{\mathrm{old}}}(x_1 \mid c)}
&=
\exp\!\Biggl(
-
\mathbb{E}_{t}
\Bigl[
w(t)\| v_\theta(x_t,t,c) - (x_1 - x_0) \|^2 \\
&\hspace{7.5em}
-
w(t)\| v_{\theta_{\mathrm{old}}}(x_t,t,c) - (x_1 - x_0) \|^2
\Bigr]
\Biggr).
\end{align*}

The corresponding policy-gradient update is
\[
\nabla_\theta \log \hat{\pi}_\theta(x_1 \mid c)\, A
=
-
\nabla_\theta
\mathbb{E}_{t}
\left[
w(t)\| v_\theta(x_t,t,c) - (x_1 - x_0) \|^2
\right] A.
\]
Positive advantages reduce the flow-matching loss for high-reward samples, while negative advantages suppress low-reward ones. AWM further includes a velocity-space KL regularizer
\[
D_{\mathrm{KL}}
\approx
w(t)\| v_\theta(x_t,t,c) - v_{\mathrm{ref}}(x_t,t,c) \|^2,
\]
which stabilizes updates by constraining deviations from a reference model.

In contrast to Flow-GRPO, which introduces stochastic trajectory optimization, AWM directly aligns reinforcement learning with the original flow-matching objective, resulting in lower variance and improved training efficiency.

\subsection{Method Derivations and Proofs}
In this section we provide the derivations and proofs from Sec.~\ref{sec:method}.

\paragraph{Gradient Decomposition}
We begin with deriving the gradient decomposition in Eq.~\ref{eq:on_policy_gradient_decomposition}.

\begin{align*}
\mathcal{L}_{\mathrm{on}}
&=
\int p_{\theta}(\tau \mid c_{\mathrm{E}})\, C_{\theta}(\tau)\, d\tau, \\
\nabla_{\theta} \mathcal{L}_{\mathrm{on}}
&=
\int \nabla_{\theta}
\big[ p_{\theta}(\tau \mid c_{\mathrm{E}})\, C_{\theta}(\tau) \big] d\tau \\
&=
\int \Big[
C_{\theta}(\tau)\nabla_{\theta}p_{\theta}(\tau \mid c_{\mathrm{E}})
+
p_{\theta}(\tau \mid c_{\mathrm{E}})\nabla_{\theta}C_{\theta}(\tau)
\Big] d\tau 
\quad \text{\small (product rule)}\\
&=
\int p_{\theta}(\tau \mid c_{\mathrm{E}})
C_{\theta}(\tau)
\nabla_{\theta}\log p_{\theta}(\tau \mid c_{\mathrm{E}})\, d\tau 
\quad \text{\small (score function trick)}\\
&\quad+
\int p_{\theta}(\tau \mid c_{\mathrm{E}})
\nabla_{\theta}C_{\theta}(\tau)\, d\tau \\
&=
\mathbb{E}_{\tau \sim p_{\theta}}
\big[
C_{\theta}(\tau)\nabla_{\theta}\log p_{\theta}(\tau \mid c_{\mathrm{E}})
\big]
+
\mathbb{E}_{\tau \sim p_{\theta}}
\big[
\nabla_{\theta}C_{\theta}(\tau)
\big]
\quad \text{\small (rewrite as expectations)}.
\end{align*}

\paragraph{Proposition}
We continue with the proof of Proposition~\ref{prop1}:

\begin{proof}
Let $x_t^\theta$ and $x_t^{\theta'}$ denote the student and teacher flows initialized from the same
$x_0\sim p_0$, so that $x_0^\theta=x_0^{\theta'}$. Define
$\Delta_t:=x_t^\theta-x_t^{\theta'}$. Then
\[
\frac{d}{dt}\Delta_t
=
v_\theta(x_t^\theta,t\mid c_{\mathrm E})
-
v_{\theta'}(x_t^{\theta'},t\mid c_{\mathrm D}).
\]
Adding and subtracting $v_{\theta'}(x_t^\theta,t\mid c_{\mathrm D})$ and using the $L$-Lipschitzness of
$v_{\theta'}(\cdot,t\mid c_{\mathrm D})$ gives
\[
\frac{d}{dt}\|\Delta_t\|
\le
\left\|
v_\theta(x_t^\theta,t\mid c_{\mathrm E})
-
v_{\theta'}(x_t^\theta,t\mid c_{\mathrm D})
\right\|
+
L\|\Delta_t\|.
\]
Since $\Delta_0=0$, Gr\"onwall's inequality implies
\[
\|\Delta_1\|
\le
e^L
\int_0^1
\left\|
v_\theta(x_t^\theta,t\mid c_{\mathrm E})
-
v_{\theta'}(x_t^\theta,t\mid c_{\mathrm D})
\right\|\,dt.
\]
Therefore, by Cauchy--Schwarz,
\[
\mathbb E_{x_0\sim p_0}\|\Delta_1\|^2
\le
e^{2L}
\mathbb E_{x_0\sim p_0}
\left[
\int_0^1
\left\|
v_\theta(x_t^\theta,t\mid c_{\mathrm E})
-
v_{\theta'}(x_t^\theta,t\mid c_{\mathrm D})
\right\|^2 dt
\right]
\le
e^{2L}\varepsilon^2.
\]
The shared initialization defines a valid coupling between the terminal laws
$p_\theta(x_1\mid c_{\mathrm E})$ and $p_{\theta'}(x_1\mid c_{\mathrm D})$. Hence
\[
W_2\!\left(
p_\theta(x_1\mid c_{\mathrm E}),
p_{\theta'}(x_1\mid c_{\mathrm D})
\right)
\le
\left(\mathbb E_{x_0\sim p_0}\|\Delta_1\|^2\right)^{1/2}
\le
e^L\varepsilon.
\]
\end{proof}